
\documentclass[11pt]{article}

\usepackage[letterpaper]{geometry}
\usepackage[parfill]{parskip}

\usepackage{times}
\usepackage{graphicx} 

\usepackage[numbers,sort&compress]{natbib}

\usepackage{algorithm}
\usepackage{algorithmic}

\usepackage{tablefootnote}

\usepackage{hyperref}


\usepackage{authblk}
\usepackage{comment}

\usepackage{amsfonts,amsmath,amssymb,amsthm,bm}
\usepackage{xspace}
\usepackage{caption}
\usepackage{subcaption}
\usepackage{enumitem}
\usepackage{appendix}


\newcommand{\R}{\mathbb{R}}
\newcommand{\nlsum}{\sum\nolimits}
\newcommand{\nlprod}{\prod\nolimits}
\newcommand{\norm}[1]{\|#1\|}

\newcommand{\ip}[2]{\left\langle #1, #2\right\rangle}
\newtheorem{theorem}{Theorem}
\newtheorem{proposition}{Proposition}
\newtheorem{lemma}{Lemma}

\theoremstyle{definition}

\newtheorem{example}{Example}
\newtheorem{remark}{Remark}

\newcommand{\argmax}{\mathop{\mathrm{argmax}}}

\def\E{\mathbb{E}}
\def\P{\mathbb{P}}

\def\Var{\mathrm{Var}}

\def\tr{\mathrm{tr}}

\def\R{\mathbb{R}}
\def\cA{\mathcal{A}}

\def\cH{\mathcal{H}}
\def\cM{\mathcal{M}}

\def\cX{\mathcal{X}}
\def\cY{\mathcal{Y}}

\def\x{\mathbf{x}}
\def\y{\mathbf{y}}
\def\w{\mathbf{w}}

\newcommand{\bcfw}{\textsc{Bcfw}\xspace}
\newcommand{\algo}{\textsc{Ap-Bcfw}\xspace}
\newcommand{\spalgo}{\textsc{Sp-Bcfw}\xspace}


\begin{document}

\title{Parallel and Distributed Block-Coordinate Frank-Wolfe Algorithms}

\author[1,2]{Yu-Xiang Wang}
\author[1]{Veeranjaneyulu Sadhanala}
\author[1]{Wei Dai}
\author[1]{Willie Neiswanger}
\author[3]{Suvrit Sra}
\author[1]{Eric P. Xing}
\affil[1]{Machine Learning Department, Carnegie Mellon University}
\affil[2]{Department of Statistics, Carnegie Mellon University}
\affil[3]{LIDS, Massachusetts Institute of Technology}

\maketitle
\begin{abstract} 
  We develop parallel and distributed Frank-Wolfe algorithms; the former on shared memory machines with mini-batching, and the latter in a delayed update framework. Whenever possible, we perform computations asynchronously, which helps attain speedups on multicore machines as well as in distributed environments. Moreover, instead of worst-case bounded delays, our methods only depend (mildly) on \emph{expected} delays, allowing them to be robust to stragglers and faulty worker threads. Our algorithms assume block-separable constraints, and subsume the recent Block-Coordinate Frank-Wolfe (BCFW) method~\citep{lacoste2013block}.  Our analysis reveals problem-dependent quantities that govern the speedups of our methods over BCFW. We present experiments on structural SVM and Group Fused Lasso, obtaining significant speedups over competing state-of-the-art (and synchronous) methods.
\end{abstract}

\section{Introduction}
The classical Frank-Wolfe (FW) algorithm~\citep{frank1956algorithm} has witnessed a huge surge of interest recently~\citep{clarkson2010coresets,jaggi2011sparse,jaggi2013revisiting,bach2013everywhere}. The FW algorithm iteratively solves the problem
\begin{equation}
  \label{eq:5}
  \min_{x \in \cM}\quad f(x),
\end{equation}
where $f$ is a smooth function (typically convex) and $\cM$ is a closed convex set. The key factor that makes FW appealing is its use of a \emph{linear oracle} that solves $\min_{x\in \cM}\ip{x}{g}$, instead of a projection (quadratic) oracle that solves $\min_{x \in \cM}\norm{x-c}$, especially because the linear oracle can be much simpler and faster. 

This appeal has motivated several new variants of basic FW, e.g., regularized FW \citep{zhang2013polar,bredies2009generalized,harchaoui2013conditional},  linearly convergent special cases~\citep{lacoste2015global,garber2013linearly},  stochastic/online versions~\citep{ouyang2010fast,hazanKale12,lafond2015convergence}, and a randomized block-coordinate FW~\citep{lacoste2013block}.

But despite this progress, parallel and distributed FW variants are barely studied. In this work, we develop new parallel and distributed FW algorithms, in particular for \emph{block-separable} instances of~\eqref{eq:5} that assume the form
\begin{equation}
  \begin{split}
    \label{eq:1}
    \min_{x}\ f(x)\ \text{s.t.}\ x=[x_{(1)},...,x_{(n)}]\in \nlprod_{i=1}^n \cM_i,
  \end{split}
\end{equation}
where $\cM_i \subset \R^{m_i}$ ($1\le i\le n$) is a compact convex set and $x_{(i)}$ are coordinate blocks of $x$. This setting for FW was considered in~\citep{lacoste2013block}, who introduced the Block-Coordinate Frank-Wolfe (\bcfw) method.

Such problems arise in many applications, notably, structural SVMs~\citep{lacoste2013block}, routing~\citep{leblanc1975efficient}, group fused lasso~\citep{alaiz2013group,bleakley2011group}, trace-norm based tensor completion~\citep{liu2013tensor}, reduced rank nonparametric regression~\citep{foygel2012nonparametric}, and structured submodular minimization~\citep{jegelka2013reflection}, among others.

One approach to solve~\eqref{eq:1} is via block-coordinate (gradient) descent (BCD), which forms a local quadratic model for a block of variables, and then solves a \emph{projection} subproblem~\citep{nesterov2012efficiency,richtarik2012parallel,beck2013convergence}. However, for many problems, including the ones noted above, projection can be expensive (e.g., projecting onto the trace norm ball, onto base polytopes~\citep{fujishige2011submodular}), or even computationally intractable~\citep{collins2008exponentiated}.  

Frank-Wolfe (FW) methods excel in such scenarios as they rely only on linear oracles solving $\min_{s\in \cM}\langle s,\nabla f(\cdot)\rangle$. 
For $\cM = \prod_i \cM_i$, this breaks into the $n$ independent problems
\begin{equation}\label{eq:exact_oracle}
\min_{s_{(i)} \in \cM_i}\quad\langle s_{(i)}, \nabla_{(i)} f(x)\rangle,\quad 1 \le i \le n,
\end{equation}
where $\nabla_{(i)}$ denotes the gradient w.r.t.~the coordinates $x_{(i)}$. 
It is immediate that these $n$ subproblems can be solved in parallel (an idea dating back to at least~\citep{leblanc1975efficient}). But there is a practical impediment: updating all the coordinates at each iteration (serially or in parallel) is expensive hampering use of FW on big-data problems. 

This drawback is partially ameliorated by \bcfw~\citep{lacoste2013block}, a method that randomly selects a block $\cM_i$ at each iteration and performs FW updates with it. However, this procedure is \emph{strictly} sequential: it does not take advantage of modern multicore architectures or of high-performance distributed clusters. 

\textbf{Contributions.} In light of the above, we develop scalable FW methods, and make the following main contributions: 
\begin{list}{$\bullet$}{\leftmargin=1em}
  \setlength{\itemsep}{-1pt}
\item Parallel and distributed block-coordinate Frank-Wolfe algorithms, henceforth both referred as \algo, that allow asynchronous computation. \algo depends only (mildly) on the \emph{expected} delay, therefore is robust to stragglers and faulty worker threads. 
\item An analysis of the primal and primal-dual convergence of \algo and its variants for any minibatch size and potentially unbounded maximum delay. 
When the maximum delay is actually bounded, we show stronger results using results from load-balancing on max-load bounds.


%
\item Insightful deterministic conditions under which minibatching \emph{provably} improves the convergence rate for a class of problems (sometimes by orders of magnitude).
\item Experiments that demonstrate on real data how our algorithm solves a structural SVM problem several times faster than the state-of-the-art.
\end{list}
In short, our results contribute to making FW more attractive for big-data applications. To lend further perspective, we  compare our methods to some closely related works below. Space limits our summary; we refer the reader to~\citet{jaggi2013revisiting,zhang2012accelerated,lacoste2013block,freund2014new} for additional notes and references.

%
\textbf{\bcfw and Structural SVM.}
Our algorithm \algo extends and generalizes \bcfw to parallel computation using mini-batches. Our convergence analysis follows the proof structure in \citet{lacoste2013block}, but with different stepsizes that must be carefully chosen. Our results contain \bcfw as a special case. A large portion of \citet{lacoste2013block} focuses on more explicit (and stronger) guarantee for \bcfw on structural SVM. While we mainly focus on a more general class of problems, the particular subroutine needed by structural SVM requires special treatment; we discuss the details in Appendix~\ref{structSVM}. 

\textbf{Parallelization of sequential algorithms.}
The idea of parallelizing sequential optimization algorithms is not new. It dates back to \citep{tsitsiklis1986distributed} for stochastic gradient methods; more recently \citet{richtarik2012parallel,liu2013asynchronousCD,strads} study parallelization of BCD. The conditions under which these parallel BCD methods succeed, e.g., expected separable overapproximation (ESO), and coordinate Lipschitz conditions, bear a close resemblance to our conditions in Section~\ref{sec:speedup}, but are not the same due to differences in how solutions are updated and what subproblems  arise. In particular, our conditions are \emph{affine invariant}.
We provide detailed comparisons to parallel coordinate descents in Appendix~\ref{sec:cmp_richtarik}.

\textbf{Asynchronous algorithms.} Asynchronous algorithms that allow delayed parameter updates have been proposed earlier for stochastic gradient descent~\citep{niu2011hogwild} and parallel BCD~\citep{liu2013asynchronousCD}. We propose the first asynchronous algorithm for Frank-Wolfe. Our asynchronous scheme not only permits delayed minibatch updates, but also allows the updates for coordinate blocks \emph{within each} minibatch to have different delays. Therefore, each update may not be a solution of \eqref{eq:exact_oracle} for any single $x$. In addition, we obtained strictly better dependency on the delay parameter than predecessors (e.g., an exponential improvement over \citet{liu2013asynchronousCD}) possibly due to a sharper analysis. 


\textbf{Other related work.} While preparing our manuscript, we discovered the preprint \citep{bellet2014distributed} which also studies distributed Frank-Wolfe. We note that \citep{bellet2014distributed} focuses on Lasso type problems and communication costs, and hence, is not directly comparable to our results.


\textbf{Notation.} We briefly summarize our notation now. The vector $x\in \R^m$ denotes the parameter vector, possibly split into $n$ coordinate blocks. For block $i=1,...,n$, $E_i \in \R^{m \times m_i}$ is the projection matrix which projects $x \in \R^m$ down to $x_{(i)} \in \R^{m_i}$; thus $x_{(i)} = E_ix$. The adjoint operator $E_i^*$ maps $\R^{m_i} \to \R^m$, thus $x_{[i]} = E_i^*x_{(i)}$ is $x$ with zeros in all dimensions except $x_{(i)}$ (note the subscript $x_{[i]}$).
We denote the size of a minibatch by $\tau$, and the number of parallel workers (threads) by $T$. Unless otherwise stated, $k$ denotes the iteration/epoch counter and $\gamma$ denotes a stepsize. Finally, $C_f^{\tau}$ (and other such constants) denotes some curvature measure associated with function $f$ and minibatch size $\tau$. Such constants are important in our analysis, and will be described in greater detail in the main text.


%
%

\section{Algorithm}
\label{sec:alg}
In this section, we develop and analyze an asynchronous parallel block-coordinate Frank-Wolfe algorithm, hereafter \algo, to solve \eqref{eq:1}.

Our algorithm is designed to run fully asynchronously on either a shared-memory multicore architecture or on a distributed system. For the shared-memory model, the computational work is divided amongst worker threads, each of which has access to a pool of coordinates that it may work on, as well as to the shared parameters. This setup matches the system assumptions in \citet{niu2011hogwild,richtarik2012parallel,liu2013asynchronousCD}, and most modern multicore machines permit such an arrangement. On a distributed system, the parameter server~\citep{li2013parameter,dai2013petuum} broadcast the most recent parameter vector periodically to each worker and workers keep sending updates to the parameter vector after solving the subroutines corresponding to a randomly chosen parameter. In either settings, we do not wait fo slower workers or synchronize the parameters at any point of the algorithm, therefore many updates sent from the workers could be calculated based on a delayed parameter.

\begin{algorithm}[t]
	\caption{\algo: Asynchronous Parallel Block-Coordinate Frank-Wolfe (Distributed)}
	\label{alg:PFW}
	\begin{algorithmic}
		\STATE{\textsc{------------------------Server node---------------------}}
		\STATE {\bfseries Input:} An initial feasible $x^{(0)}$, mini-batch size $\tau$, number of workers $T$.
		
		\STATE{Broadcast $x^{(0)}$ to all workers.}
		\FOR{$k$ = 1,2,... ($k$ is the iteration number.)}
		\STATE{1. } Read from buffer until it has updates for $\tau$ disjoint blocks (overwrite in case of collision\footnotemark). Denote the index set by $S$.
		\STATE{2. } Set step size $\gamma=\frac{2n\tau}{\tau^2k+ 2n}$.
		\STATE{3. } Update $x^{(k)} = x^{(k-1)} + \gamma \sum_{i\in S}( s_{[i]} - x_{[i]}^{(k-1)})$.
		\STATE{4. } Broadcast $x^{(k)}$ (or just $x^{(k)} - x^{(k-1)}$) to workers.
				\IF{converged}
				\item Broadcast STOP signal to workers and break.
				\ENDIF
		\ENDFOR
		\STATE {\bfseries Output: } $x^{(k)}$.
		
		\STATE{\textsc{-----------------------Worker nodes---------------------}}
				\STATE{a. } Set $x$ to be $x^{(0)}$
				\WHILE {no STOP signal received}
				\IF {New update $x'$ is received}
				\STATE{b.} Update $x = x'$.
				\ENDIF
				\STATE{c.} Randomly choose $i\in [n]$.
				\STATE{d.} Calculate partial gradient $\nabla_{(i)}f(x)$ and solve \eqref{eq:exact_oracle}.
				\STATE{e.} Send $\{i,s_{(i)}\}$ to the server.
				\ENDWHILE
	\end{algorithmic}
\end{algorithm}
 \footnotetext{We bound the probability of collisions in Appendix~\ref{sec:distr}.}

The above scheme is made explicit by the pseudocode in Algorithm~\ref{alg:PFW}, following a server-worker terminology. The shared memory version of the pseudo-code is very similar, hence deferred to the Appendix.  The three most important questions pertaining to Algorithm~\ref{alg:PFW} are:
\begin{itemize}
	\setlength{\itemsep}{-1pt}
	\item Does it converge?
	\item If so, then how fast? And how much faster is it compared to \bcfw ($\tau=1$)?
	\item How do delayed updates affect the convergence?
\end{itemize}

We answer the first two questions in Section~\ref{sec:conv} and \ref{sec:speedup}. Specifically, we show \algo converges at the familiar $O(1/k)$ rate.  Our analysis reveals that the speed-up of \algo over \bcfw through parallelization is problem dependent. Intuitively, we show that the extent that mini-batching ($\tau > 1$) can speed up convergence depends on the average ``coupling'' of the objective function $f$ across different coordinate blocks. For example, we show that if $f$ has a block symmetric diagonally dominant Hessian, then \algo converges $\tau/2$ times faster.  We address the third question in Section~\ref{sec:delayed_gradient}, where we establish convergence results that depend only mildly in the ``expected'' delay $\kappa$. The bound is proportional to $\kappa$ when we allow the delay to grow unboundedly, and proportional to $\sqrt{\kappa}$ when the delay is bounded by a small $\kappa_{\max}$. 



\subsection{Main convergence results}\label{sec:conv}
Before stating the results, we need to define a few quantities. The first key quantity---also key to the analysis of several other FW methods---is the notion of \textbf{curvature}.  Since \algo updates a subset of coordinate blocks at a time, we define \emph{set curvature} for an index set $S \subseteq [n]$ as
\begin{align}
\label{eq:curv_S}
C_f^{(S)}:= \sup_{\begin{subarray}{l} x\in \cM, s_{(S)}\in \cM^{(S)},\\\gamma\in[0,1],\\y=x+\gamma(s_{[S]}-x_{[S]})\end{subarray}}
\frac{2}{\gamma^2} &\big(f(y)-f(x)- \\[-7mm]
&\hspace{2mm}\langle y_{(S)}-x_{(S)},\nabla_{(S)}f(x)\rangle\big) \nonumber
\end{align}
For index sets of size $\tau$, we define the \emph{expected set curvature} over a uniform choice of subsets as
\begin{equation}\label{eq:curv_tau}
C_f^{\tau} := \E_{S: |S|=\tau}[C_f^{(S)}] = {\textstyle {n\choose \tau}}^{-1}\nlsum_{S\subset [n], |S|=\tau}C_f^{(S)}.
\end{equation}
These curvature definitions are closely related to the global curvature constant $C_f$ of \citep{jaggi2013revisiting} and the coordinate curvature $C_f^{(i)}$ and product curvature $C_f^{\otimes}$ of \citep{lacoste2013block}. Lemma~\ref{lem:prop.curvature} makes this relation more precise.
\begin{lemma}[Curvature relations]\label{lem:prop.curvature}
	Suppose $S \subseteq [n]$ with cardinality $|S| = \tau$ and \( i \in S \). Then,
	\begin{enumerate}
		\item \quad$C_f^{(i)}\leq C_f^{(S)}\leq C_f$;\quad\quad 
		\item \quad$\frac{1}{n}C_f^{\otimes}=C_f^{1}\leq C_f^{\tau} \leq C_f^{n}=C_f$.
	\end{enumerate}
\end{lemma}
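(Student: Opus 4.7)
The plan is to prove both items by purely set-theoretic inclusion/embedding arguments on the suprema defining the various curvature constants, together with a double-counting identity for the inequality $C_f^1 \le C_f^\tau$. No analytic work on $f$ is needed because every curvature here is a supremum of the same second-order residual $f(y)-f(x)-\langle y-x,\nabla f(x)\rangle$ scaled by $2/\gamma^2$; the only thing that changes is the set of admissible triples $(x,s,\gamma)$.

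For part 1, I would first observe that whenever $i \in S$, any admissible triple for $C_f^{(i)}$ lifts to an admissible triple for $C_f^{(S)}$ by setting $s_{(j)} = x_{(j)}$ for every $j\in S\setminus\{i\}$. The point $y$ is unchanged by this padding (since $s_{[j]} - x_{[j]} = 0$ for those $j$), and the inner product remains equal because $\langle y_{(S)} - x_{(S)}, \nabla_{(S)} f(x)\rangle$ collapses to $\langle y_{(i)} - x_{(i)}, \nabla_{(i)} f(x)\rangle$ when the other blocks agree. Hence the value of the objective in the supremum is preserved, so the larger supremum dominates: $C_f^{(i)} \le C_f^{(S)}$. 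The inequality $C_f^{(S)} \le C_f$ is the same trick one more layer out: any admissible triple for $C_f^{(S)}$ lifts to one for $C_f$ by choosing $s_{(j)} = x_{(j)}$ for $j \notin S$; then $y - x = \gamma(s_{[S]} - x_{[S]})$, and since $y-x$ is zero outside $S$, $\langle y-x,\nabla f(x)\rangle = \langle y_{(S)}-x_{(S)}, \nabla_{(S)}f(x)\rangle$, so again the supremand is preserved.

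For part 2, the equality $\tfrac{1}{n}C_f^{\otimes} = C_f^1$ is immediate from the definition $C_f^{\otimes} = \sum_i C_f^{(i)}$ used by \citet{lacoste2013block} and the fact that for $\tau = 1$, singleton index sets are sampled uniformly. The upper bound $C_f^\tau \le C_f^n = C_f$ is a direct consequence of applying part 1 termwise to the average over $S$. The main (still light) step is $C_f^1 \le C_f^\tau$: I would plug $C_f^{(i)} \le C_f^{(S)}$ (for $i \in S$) into a double-counting identity,
\begin{equation*}
\sum_{|S|=\tau} C_f^{(S)} \;\ge\; \sum_{|S|=\tau} \frac{1}{\tau}\sum_{i\in S} C_f^{(i)} \;=\; \frac{\binom{n-1}{\tau-1}}{\tau}\sum_{i=1}^n C_f^{(i)},
\end{equation*}
and then divide by $\binom{n}{\tau}$, using $\binom{n-1}{\tau-1}/\binom{n}{\tau} = \tau/n$, to arrive at $C_f^\tau \ge \tfrac{1}{n}\sum_i C_f^{(i)} = C_f^1$. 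I do not foresee a real obstacle; the only subtlety is making sure the ``padding'' construction in part 1 genuinely preserves both the point $y$ and the inner product term, which it does precisely because $E_j^* E_j$ acts as zero on coordinates outside block $j$.
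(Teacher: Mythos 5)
Your proof is correct and follows essentially the same route as the paper's: part 1 by embedding any feasible point of the smaller supremum into the larger one via the padding $s_{(j)}=x_{(j)}$ outside the relevant blocks, and part 2 by averaging $C_f^{(i)}\le C_f^{(S)}$ over all $\tau$-subsets. The only minor difference is in the counting for $C_f^1\le C_f^\tau$: you use the standard double-counting identity with $\binom{n-1}{\tau-1}/\binom{n}{\tau}=\tau/n$, whereas the paper partitions the family of $\tau$-subsets into $n$ classes of equal size $\binom{n}{\tau}/n$; your version is slightly cleaner since it does not require that quantity to be an integer.
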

The way the average set curvature $C_f^{\tau}$ scales with $\tau$ is critical for bounding the amount of speedup we can expect over \bcfw; we provide a detailed analysis of this speedup in Section~\ref{sec:speedup}.

The next key object is an \textbf{approximate linear minimizer}. At iteration $k$, as in~\citet{jaggi2013revisiting,lacoste2013block}, we also low the core computational subroutine that solves \eqref{eq:exact_oracle} to yield an approximate minimizer $s_(i)$. The approximation is quantified by an additive constant $\delta \ge 0$ that for a minibatch 
$S\subset [n]$ of size $\tau$, the approximate solution $s_{(S)} :=\prod_{i\in S}s_{(i)}$ obeys \emph{in expectation} that
\begin{equation}\label{eq:approx_add}
\E\left[\langle s_{(S)},\nabla_{(S)}f^{(k)} \rangle -
\min_{s'\in \cM^{(S)}}\langle s',\nabla_{(S)}f^{(k)} \rangle\right] \leq  \frac{\delta \gamma_kC_f^{\tau}}{2}.
\end{equation}
%
%
%
where the expectation is taken over both the random coins in selecting $S$ and any other source of uncertainty in this oracle call during the entire history up to step $k$. \eqref{eq:approx_add} is strictly weaker than what is required in \citet{jaggi2013revisiting,lacoste2013block}, as we only need the approximation to hold in expectation. 
With definitions~\eqref{eq:curv_tau} and~\eqref{eq:approx_add} in hand, we are ready to state our first main convergence result.

\begin{theorem}[Primal Convergence]\label{thm:PFW_primal}
	Suppose we employ a linear minimizer that satisfies~\eqref{eq:approx_add} when solving $\tau$ subproblems (\ref{eq:exact_oracle}). Then, for each $k\geq 0$, the iterations in Algorithm~\ref{alg:PFW} and its line search variant (Steps 2b and 5 in Algorithm~\ref{alg:PFW}) obey
	\begin{equation*}
	\E[f(x^{(k)})] - f(x^*) \leq \frac{2n C}{\tau^2 k + 2n},
	\end{equation*}
	where the constant $C= nC_f^{\tau}(1+\delta)+f(x^{(0)})-f(x^*).$
\end{theorem}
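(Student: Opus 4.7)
The plan is to follow the standard Frank--Wolfe recursion template, adapted to block mini-batch updates and the approximate oracle. As the first step, I would apply the set curvature bound~\eqref{eq:curv_S} directly to the update $x^{(k)}=x^{(k-1)}+\gamma_k\sum_{i\in S}(s_{[i]}-x^{(k-1)}_{[i]})$ with $|S|=\tau$, giving the per-step descent inequality
\[
f(x^{(k)})\leq f(x^{(k-1)})+\gamma_k\langle s_{(S)}-x^{(k-1)}_{(S)},\nabla_{(S)}f(x^{(k-1)})\rangle+\tfrac{\gamma_k^2}{2}C_f^{(S)}.
\]

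Next I would take the expectation over $S$ chosen uniformly with $|S|=\tau$, conditional on the past. Three ingredients combine. The curvature term averages to $\tfrac{\gamma_k^2}{2}C_f^\tau$ by definition~\eqref{eq:curv_tau}. For the linear term, I would invoke the approximate-oracle condition~\eqref{eq:approx_add} to replace $\langle s_{(S)},\nabla_{(S)}f\rangle$ by $\min_{s'\in\cM^{(S)}}\langle s',\nabla_{(S)}f\rangle$ at an additive cost of $\tfrac{\delta\gamma_k C_f^\tau}{2}$, then upper bound the minimum by $\langle x^*_{(S)},\nabla_{(S)}f(x^{(k-1)})\rangle$. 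Since each coordinate block lies in $S$ with probability $\tau/n$, for any $v$ independent of $S$ we have $\E_S[\langle v_{(S)},\nabla_{(S)}f(x^{(k-1)})\rangle]=\tfrac{\tau}{n}\langle v,\nabla f(x^{(k-1)})\rangle$; applying this with $v=x^*-x^{(k-1)}$ and using convexity of $f$ yields
\[
\E_S[\langle x^*_{(S)}-x^{(k-1)}_{(S)},\nabla_{(S)}f(x^{(k-1)})\rangle]\leq -\tfrac{\tau}{n}\bigl(f(x^{(k-1)})-f(x^*)\bigr).
\]
Setting $h_k:=\E[f(x^{(k)})]-f(x^*)$ and taking the outer expectation produces the clean one-step recursion
\[
h_k\leq\bigl(1-\tfrac{\gamma_k\tau}{n}\bigr)h_{k-1}+\tfrac{\gamma_k^2}{2}(1+\delta)C_f^\tau.
\]

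Finally I would prove the stated bound by induction on $k$ with the prescribed stepsize $\gamma_k=\tfrac{2n\tau}{\tau^2 k+2n}$. The base case $k=0$ is immediate because $C$ was defined to absorb $f(x^{(0)})-f(x^*)$. For the inductive step, plugging $h_{k-1}\leq \tfrac{2nC}{\tau^2(k-1)+2n}$ into the recursion and simplifying (writing $p=\tau^2(k-1)+2n$ so that $\tau^2 k+2n=p+\tau^2$) reduces the target inequality to $nC_f^\tau(1+\delta)\cdot p\leq C(p+\tau^2)$, which holds by the defining inequality $C\geq nC_f^\tau(1+\delta)$ and $p\leq p+\tau^2$. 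For the line-search variant, no additional work is needed: line search along the mini-batch direction can only further decrease the objective, so every inequality in the chain remains valid.

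I expect the main obstacle to be the careful handling of the approximate-oracle condition under the expectation over $S$, so that only the averaged curvature $C_f^\tau$ (and not any worst-case $C_f^{(S)}$) appears in the recursion, and the attendant bookkeeping in the induction step where the schedule $\gamma_k$ must interlock precisely with the constant $C=nC_f^\tau(1+\delta)+f(x^{(0)})-f(x^*)$. Neither step is conceptually deep, but both must be executed cleanly for the constant in the final bound to come out exactly as stated.
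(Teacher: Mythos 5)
Your proposal is correct and follows essentially the same route as the paper: the curvature-based descent inequality, expectation over the random block set $S$ combined with the approximate-oracle condition, and induction with the prescribed stepsize (your closing observation that line search only helps is exactly how the paper handles that variant). The only cosmetic difference is that you compare directly against $x^*_{(S)}$ and invoke convexity, whereas the paper routes the same estimate through the surrogate duality gap $g(x)\geq\langle x-x^*,\nabla f(x)\rangle\geq f(x)-f(x^*)$ (which it needs anyway for the primal-dual theorem).
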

At a first glance, the $n^2C_f^{\tau}$ term in the numerator might seem bizzare, but as we will see in the next section, $C_f^{\tau}$ can be as small as $O(\frac{\tau}{n^2})$. This is the scale of the constant one should keep in mind to compare the rate to other methods, e.g. coordinate descent.
Also note that so far this convergence result does not explicitly work for delayed updates, which we will analyze in Section~\ref{sec:delayed_gradient} separately via the approximation parameter $\delta$.

For FW methods, one can also easily obtain a convergence guarantee in an appropriate primal-dual sense. To this end, we introduce our version of the \textbf{surrogate duality gap}~\citep{jaggi2013revisiting}; we define this as 
\begin{align}
g(x) &= \max_{s\in\cM} \langle x-s,\nabla f(x) \rangle \label{eq:duality_gap}\\
&=\sum_{i=1}^{n} \max_{s_{(i)}\in \cM^{(i)}} \langle x_{(i)}-s_{(i)},\nabla_{(i)}f(x)\rangle =\sum_{i=1}^n g^{(i)}(x).\nonumber
\end{align}
To see why~\eqref{eq:duality_gap} is actually a duality gap, note that since $f$ is convex, the linearization $f(x)+\langle s-x, \nabla f(x)\rangle$ is always smaller than the function evaluated at any $s$, so that
$$g(x)\geq \langle x-x^*,\nabla f(x) \rangle\geq f(x)-f(x^*).$$
This duality gap is obtained for ``free'' in batch Frank-Wolfe, but not in \bcfw or \algo. Here, we only have an unbiased estimator $\hat{g}(x) = \frac{n}{|S|}\sum_{i\in S} g^{(i)}(x)$. As $\tau$ gets large, $\hat{g}(x)$ is close to $g(x)$ with high probability (McDiarmid's Inequality), and can  still be useful as a stopping criterion.

%

\begin{theorem}[Primal-Dual Convergence]\label{thm:PFW_primal-dual}
	Suppose we run Algorithm~\ref{alg:PFW} and its line search variant up to $K$ iterations, let $g_k :=\E g(x^{(k)})$ and $K\geq 1$, then there exists at least one $k^*\in [1,...,K]$ such that the expected surrogate duality gap satisfies
	$$
	g_{k^*}\leq \bar{g}_K \leq \frac{6nC}{\tau^2(K+1)},
	$$
	where $C$ is as in Theorem~\ref{thm:PFW_primal} and $\bar{g}_k$ is the weighted average $\frac{2}{K(K+1)}\sum_{k=1}^K kg_k$.
	%
\end{theorem}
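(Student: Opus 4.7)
The plan is to derive a per-iteration descent-style inequality relating the expected duality gap $g_{k-1}$ to the primal suboptimality decrease $h_{k-1}-h_k$, then multiply by $k$ and sum so that the left-hand side becomes (proportional to) $\bar g_K$, while the right-hand side telescopes and is controlled by the primal rate of Theorem~\ref{thm:PFW_primal}.

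\textbf{Step 1: Per-iteration inequality.} Apply the definition of the set curvature $C_f^{(S)}$ in~\eqref{eq:curv_S} along the update direction $d^{(k)}=\sum_{i\in S}(s_{[i]}-x_{[i]}^{(k-1)})$ with stepsize $\gamma_k$ to get
\[
f(x^{(k)})-f(x^{(k-1)})\leq \gamma_k\langle d^{(k)},\nabla f(x^{(k-1)})\rangle+\tfrac{\gamma_k^2}{2}C_f^{(S)}.
\]
Take the conditional expectation over the uniform $\tau$-subset $S$ and the oracle's internal coins. The block-wise decomposition $g(x)=\sum_i g^{(i)}(x)$ in~\eqref{eq:duality_gap} together with the approximation guarantee~\eqref{eq:approx_add} converts the inner-product term into $-\tfrac{\tau}{n}g(x^{(k-1)})+\tfrac{\delta\gamma_k}{2}C_f^{\tau}$, and the curvature term averages to $\tfrac{\gamma_k^2}{2}C_f^{\tau}$. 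Taking full expectation gives
\[
\gamma_k\tfrac{\tau}{n}\,g_{k-1}\;\leq\;h_{k-1}-h_k+\tfrac{\gamma_k^2}{2}(1+\delta)C_f^{\tau}.
\]

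\textbf{Step 2: Substitute the stepsize and rearrange.} Plugging in $\gamma_k=\tfrac{2n\tau}{\tau^2 k+2n}$ and dividing through yields
\[
g_{k-1}\;\leq\;\Bigl(\tfrac{k}{2}+\tfrac{n}{\tau^2}\Bigr)(h_{k-1}-h_k)+\tfrac{n^2(1+\delta)C_f^{\tau}}{\tau^2 k+2n}.
\]
Shift indices to obtain a bound on $g_k$ in terms of $(h_k-h_{k+1})$.

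\textbf{Step 3: Weighted sum and Abel summation.} Multiply the shifted inequality by $k$ and sum for $k=1,\dots,K$. Set $a_k:=\tfrac{k(\tau^2(k+1)+2n)}{2\tau^2}=\tfrac{k(k+1)}{2}+\tfrac{nk}{\tau^2}$; a direct computation gives $a_k-a_{k-1}=k+\tfrac{n}{\tau^2}$ with $a_0=0$. Summation by parts, together with $h_{K+1}\geq 0$, converts $\sum_{k=1}^K a_k(h_k-h_{k+1})$ into $\sum_{k=1}^K(k+\tfrac{n}{\tau^2})h_k$.

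\textbf{Step 4: Invoke primal bound and finish.} Use Theorem~\ref{thm:PFW_primal} to bound $h_k\leq \tfrac{2nC}{\tau^2 k+2n}$, which yields $(k+\tfrac{n}{\tau^2})h_k\leq \tfrac{2nC}{\tau^2}$ for every $k$, hence the telescoping piece contributes at most $K\tfrac{2nC}{\tau^2}$. The remaining curvature sum is bounded by $\tfrac{Kn^2(1+\delta)C_f^{\tau}}{\tau^2}\leq \tfrac{KnC}{\tau^2}$, using $C\geq nC_f^{\tau}(1+\delta)$. Combining gives $\sum_{k=1}^K k\,g_k\leq \tfrac{3KnC}{\tau^2}$, and dividing by $K(K+1)/2$ yields exactly $\bar g_K\leq \tfrac{6nC}{\tau^2(K+1)}$. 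Finally, since $\bar g_K$ is a positive-weight average of $g_1,\dots,g_K$, at least one index $k^*$ must satisfy $g_{k^*}\leq\bar g_K$.

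The main obstacle is Step 1: cleanly separating the randomness of $S$ from the oracle's randomness and showing that the approximate-oracle condition~\eqref{eq:approx_add} averages precisely into a $-(\tau/n)g(x)$ term plus the $\delta$-correction. The rest is essentially bookkeeping; the constant $6$ emerges from the tight computation $a_k-a_{k-1}=k+n/\tau^2$ plus the $h_k$-bound from Theorem~\ref{thm:PFW_primal}.
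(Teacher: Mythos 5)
Your proposal is correct and follows essentially the same route as the paper: both start from the descent recursion $h_{k+1}\leq h_k-\tfrac{\gamma\tau}{n}g_k+\tfrac{\gamma^2}{2n}C$, rearrange to bound $g_k$ by $\tfrac{n}{\gamma\tau}(h_k-h_{k+1})+\tfrac{\gamma C}{2\tau}$, take the weighted combination with weights proportional to $k$, apply Abel summation, and invoke the primal rate of Theorem~\ref{thm:PFW_primal} to bound the resulting sums, arriving at the same constant $6$. Your bookkeeping via $a_k-a_{k-1}=k+n/\tau^2$ is just a cleaner rendering of the paper's computation of $\rho_{k+1}/\gamma_{k+1}-\rho_k/\gamma_k$, and the "obstacle" you flag in Step 1 is already resolved inside the paper's Lemma~\ref{lem:recursion_PFW} by the tower property of expectations.
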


{\bf Relation with FW and \bcfw:} The above convergence guarantees can be thought of as an interpolation between \bcfw and batch FW. If we take $\tau=1$, this gives exactly the convergence guarantee for \bcfw \citep[Theorem 2]{lacoste2013block} and if we take $\tau=n$, we can drop $f(x^{(0)})-f(x^*)$ from $C$ (with a small modification in the analysis) and it reduces to the classic batch guarantee as in \citep{jaggi2013revisiting}.

{\bf Dependence on initialization:} Unlike classic FW, the convergence rate for our method depends on the initialization. When $h_0:=f(x^{(0)})-f(x^*)\geq n C_f^\tau$ and $\tau^2<n$, the convergence is slower by a factor of $\frac{n}{\tau^2}$. The same concern was also raised in \citep{lacoste2013block} with $\tau=1$. We can actually remove the $f(x^{(0)})-f(x^*)$ from $C$ as long as we know that $h_0\leq n C_f^\tau$. By Lemma~\ref{lem:prop.curvature}, the expected set curvature $C_f^{\tau}$ increases with $\tau$, so the fast convergence region becomes larger when we increase $\tau$. In addition, if we pick $\tau^2>n$, the rate of convergence is not  affected by initialization anymore.


{\bf Speedup:} The careful reader may have noticed the $n^2C_f^{\tau}$ term in the numerator. This is undesirable as $n$ can be large (for instance, in structural SVM $n$ is the total number of data points). The saving grace in \bcfw is that when $\tau=1$,  $C_f^{\tau}$ is as small as $O(n^{-2})$ (see \citep[Lemmas A1~and~A2]{lacoste2013block}), and it is easy to check that the dependence in $n$ is the same even for $\tau>1$. What really matters is how much speedup one can achieve over \bcfw, and this speedup critically relies on how $C_f^{\tau}$ depends on $\tau$. Analyzing this dependence will be our main focus in the next section.

\subsection{Effect of parallelism / mini-batching}\label{sec:speedup}
To understand when mini-batching is meaningful and to quantify its speedup, we take a more careful look at the expected set curvature $C_f^{\tau}$ in this section. In particular, we analyze and present a set of insightful conditions  that govern its relationship with $\tau$. The key idea is to roughly quantify how strongly different coordinate blocks interact with each other.

To begin, assume that there exists a positive semidefinite matrix $H$ such that for any $x,y\in\cM$
\begin{equation}\label{eq:H}
f(y) \leq f(x) + \langle y-x, \nabla f(x)\rangle +  \frac{1}{2}(y-x)^TH (y-x).
\end{equation}
The matrix $H$ may be viewed as a generalization of the gradient's Lipschitz constant (a scalar) to a matrix. For quadratic functions $f(x)=\frac{1}{2}x^TQ x +c^Tx$, we can take $H=Q$. For twice differentiable functions, we can choose
$
H \in \{K \;|\; K \succeq \nabla^2 f(x), \;\;\forall x\in \cM\}.
$


Since $x=[x_1, ...,x_n]$ (we write $x_i$ instead of $x_{(i)}$ for brevity), we  separate  $H$ into $n\times n$ blocks; so $H_{ij}$ represents the block corresponding to $x_i$ and $x_j$ such that we can take the product $x_i^TH_{ij}x_j$. Now, we define a \emph{boundedness} parameter $B_{i}$ for every $i$, and an \emph{incoherence condition} with parameter $\mu_{ij}$ for every block coordinate pair $\cM_i, \cM_j$ such that
\begin{align*}
&B_{i} =\sup_{x_i\in \cM_i} x_i^TH_{ii}x_i, &&
\mu_{ij} =\sup_{x_i\in \cM_i, x_j\in \cM_j} x_i^TH_{ij}x_j, \\
&B =\E_{i\sim \mathrm{Unif}([n])} B_{i}, &&
\mu =\E_{(i,j)\sim \mathrm{Unif(\{(i,j)\in [n]^2, i\neq j\})}} \mu_{ij}. 
\end{align*}
Then, using these quantities, we obtain the following bound on the expected set-curvature.
\begin{theorem}\label{thm:Cf_master_lemma}
	If problem~\eqref{eq:1} obeys $B$-expected boundedness and $\mu$-expected incoherence. Then,
	\begin{equation}\label{eq:Cf^tau_tightbound}
	C_f^{\tau} \leq 4(\tau B +\tau(\tau-1) \mu)\qquad\text{for any} \quad \tau=1,...,n.
	\end{equation}
\end{theorem}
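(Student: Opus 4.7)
The plan is to start from the definition \eqref{eq:curv_S} of $C_f^{(S)}$, apply the quadratic upper bound \eqref{eq:H} to cancel the prefactor $2/\gamma^2$, and reduce everything to a deterministic bound on the block-quadratic form $(s_{[S]}-x_{[S]})^\top H (s_{[S]}-x_{[S]})$. Then I would take the expectation over a uniformly random subset $S$ of size $\tau$ and use elementary inclusion probabilities.

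First, fix $S\subseteq[n]$ with $|S|=\tau$, together with $x\in\cM$, $s_{(S)}\in \cM^{(S)}$, $\gamma\in[0,1]$, and set $y=x+\gamma(s_{[S]}-x_{[S]})$. The key observation is that $y-x$ is supported only on the blocks indexed by $S$, so $\langle y-x,\nabla f(x)\rangle = \langle y_{(S)}-x_{(S)},\nabla_{(S)}f(x)\rangle$. Plugging \eqref{eq:H} into the numerator of \eqref{eq:curv_S} therefore gives
\begin{equation*}
C_f^{(S)} \;\le\; \sup_{x,s_{(S)}} (s_{[S]}-x_{[S]})^\top H (s_{[S]}-x_{[S]}) \;=\; \sup_{x,s_{(S)}} \sum_{i,j\in S} (s_i-x_i)^\top H_{ij}(s_j-x_j),
\end{equation*}
where I write $x_i$ for $x_{(i)}$ and decompose $H$ into the blocks $H_{ij}$ matching the block structure of $x$.

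Second, I would bound the diagonal and off-diagonal contributions separately. For the diagonal terms $i=j\in S$, since $H_{ii}\succeq 0$ (a principal submatrix of a PSD matrix), $\|H_{ii}^{1/2} x_i\|\le \sqrt{B_i}$ for every $x_i\in\cM_i$, and the triangle inequality yields
\begin{equation*}
(s_i-x_i)^\top H_{ii}(s_i-x_i) \;=\; \|H_{ii}^{1/2}(s_i-x_i)\|^2 \;\le\; (2\sqrt{B_i})^2 \;=\; 4B_i.
\end{equation*}
For the off-diagonal terms $i\neq j$, expanding $(s_i-x_i)^\top H_{ij}(s_j-x_j)$ into four bilinear pieces and bounding each one by $\mu_{ij}$ gives the uniform bound $4\mu_{ij}$. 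Summing, $C_f^{(S)}\le 4\sum_{i\in S}B_i + 4\sum_{i\neq j,\,i,j\in S}\mu_{ij}$.

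Finally, I take $\mathbb{E}_{S:|S|=\tau}$ of both sides. Under the uniform distribution on $\tau$-subsets of $[n]$, $\Pr[i\in S]=\tau/n$ and $\Pr[\{i,j\}\subseteq S] = \tau(\tau-1)/(n(n-1))$, so by the definitions of $B$ and $\mu$,
\begin{equation*}
\mathbb{E}_S\Big[\sum_{i\in S} B_i\Big] = \tau B,\qquad \mathbb{E}_S\Big[\sum_{\substack{i\neq j\\ i,j\in S}} \mu_{ij}\Big] = \tau(\tau-1)\mu,
\end{equation*}
and \eqref{eq:Cf^tau_tightbound} follows. The main obstacle is a mild technical point in the off-diagonal step: $\mu_{ij}$ is defined as an unsigned supremum, so one should either interpret $\mu_{ij}$ as bounding $|x_i^\top H_{ij} x_j|$ (which is the natural reading, and is consistent with the PSD Cauchy--Schwarz bound $|x_i^\top H_{ij}x_j|\le \sqrt{B_iB_j}$), or else handle the four expanded pieces using that PSD inequality to reach the same conclusion. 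Everything else is a clean combination of the Hessian-type bound \eqref{eq:H} and standard expected-incidence identities for random subsets.
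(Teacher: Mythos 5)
Your proposal is correct and follows essentially the same route as the paper's own proof: apply the Hessian-type bound \eqref{eq:H} to reduce $C_f^{(S)}$ to the block quadratic form $(s_{[S]}-x_{[S]})^\top H (s_{[S]}-x_{[S]})$, bound the diagonal blocks by $4B_i$ and the off-diagonal blocks by $4\mu_{ij}$, and then average over $S$ using the inclusion probabilities $\tau/n$ and $\tau(\tau-1)/(n(n-1))$. If anything, your derivation of the factor of $4$ (triangle inequality in the $H_{ii}^{1/2}$ seminorm on the diagonal, four bilinear pieces off the diagonal) and your explicit flagging of the sign convention in the definition of $\mu_{ij}$ are slightly more careful than the paper's shortcut of replacing $s_{(S)}-x_{(S)}$ by $2w$ with $w\in\cM^{(S)}$.
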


It is clear that when the incoherence term $\mu$ is large, the expected set curvature $C_f^\tau$ is proportional to $\tau^2$, and when $\mu$ is close to 0, then $C_f^{\tau}$ is proportional to $\tau$. In other words, when the interaction between  coordinates block is small, one would gain from parallelizing the block-coordinate Frank-Wolfe.
This is analogous to the situation in parallel coordinate descent \citep{richtarik2012parallel,liu2013asynchronousCD} and we will compare the rate of convergence explicitly with them in the next section.

\begin{remark}
	Let us form a matrix $M$ with $B_i$ on the diagonal and $\mu_{ij}$ on the off-diagonal. If $M$ is \emph{symmetric diagonally dominant} (SDD), i.e., the sum of absolute off-diagonal entries in each row is no greater than the diagonal entry, then $C_f^{\tau}$ is proportional to $\tau$. 
\end{remark}

The above result depends on the parameters $B$ and $\mu$. We now derive specific instances of the above results for the structural SVM and Group Fused Lasso. For the structural SVM, a simple generalization of \citep[Lemmas A.1, A.2]{lacoste2013block} shows that in the worst case, using $\tau>1$ offers no gain at all. Fortunately, if we are willing to consider a more specific problem and consider the average case instead, using larger $\tau$ does make the algorithm converge faster (and this is the case according to our experiments).


\begin{example}[{\bf Structural SVM for multi-label classification (with random data)}]\label{eg:structSVM_avg}
	We describe the application to structural SVMs in detail in Section~\ref{structSVM} (please see this section for details on notation). Here, we describe the convergence rate for this application. According to \citep{yu2009learning}, the compatibility function $\phi(x,y)$ for multiclass classification will be $[0,...,0,x^T,0,...0]^T/\lambda n$ where the only nonzero block that we fill with the feature vector is the $(y)$th block. So $\psi_i(x_i,j)=\phi(x_i,y_i) - \phi(x_i,j)$ looks like $[0,...,0,$ $x_i^T,0$ $,...0,$ $-x_i^T,0,$ $...0]^T/\lambda n$. This already ensures that
	$B = \frac{2}{n^2\lambda}$ provided $x_i$ lie on a unit sphere. Suppose we have $K$ classes and each class has a unique feature vector drawn randomly from a unit sphere in $\R^d$; furthermore, for simplicity assume we always draw $\tau<K$ data points with $\tau$ distinct labels\footnote{This is an oversimplification but it offers a rough rule-of-thumb. In practice, $C_f^{\tau}$ should be in the same ballpark as our estimate here.}
	$
	\mu \leq \sqrt{\frac{C\log d}{ d}}\frac{2}{n^2\lambda},
	$
	for some constant $C$. In addition, if $d\geq \tau^2\sqrt{C\log d}$, then with high probability
	$$
	C_f^{\tau} \leq \frac{2\tau + 2\tau^2\sqrt{\frac{C\log d}{ d}}}{n^2\lambda} \leq \frac{C\tau}{n^2\lambda},
	$$
	which yields a convergence rate $O(\frac{R^2}{\lambda \tau k})$, where $R$ $:=$\\ $\max_{i\in[n], y\in\cY_i}$ $\|\psi_i(y)\|_2$ using notation from Lemmas~A.1~and~A.2 of \citep{lacoste2013block}.
	
	This analysis suggests that a good rule-of-thumb is that we should choose $\tau$ to be at most the number of categories for the classification.
	If each class is a mixture of random draws from the unit sphere, then we can choose $\tau$ to be the underlying number of mixture components.
\end{example}

\begin{example}[{\bf Group Fused Lasso}]\label{eg:group_fused_lasso}
	The Group Fused Lasso aims to solve (typically for $q=2$)
	\begin{align}
	\label{eq:4}
	\min_{X}\quad\tfrac{1}{2}\|X-Y\|_F^2 + \lambda\|XD\|_{1,q},\qquad q >1,
	\end{align}
	where $X,Y\in \R^{d\times n}$, and column $y_t$ of $Y$ is an observed noisy $d$-dimensional feature vector at time $1\le t \le n$. The matrix $D\in \R^{n\times (n-1)}$ is the differencing matrix that takes the difference of feature vectors at adjacent time points (columns). The formulation aims to filter the trend that has some piecewise constant structures.
	The dual to~\eqref{eq:4} is
	\begin{align*}
	\max_{U} \;&-\frac{1}{2}\|UD^T\|_F^2 + \tr U D^TY^T\\
	\text{s.t.}\quad &\|U_{:,t}\|_{p} \leq \lambda, \; \forall t=1,...,n-1,
	\end{align*}
	where $p$ is conjugate to $q$, i.e., $1/p+1/q=1$. This block-constrained problem fits our structure~\eqref{eq:1}. For this problem, we find that $B \leq 2\lambda^2 d$ and $\mu\leq \lambda^2 d$, which yields the bound
	$$
	C_f^{\tau} \leq 4\tau \lambda^2d.
	$$
	Consequently, the rate of convergence becomes $O(\frac{n^2 \lambda^2 d }{\tau k})$. In this case, batch FW will have a better rate of convergence than \bcfw \footnote{Observe that $C_f^{\tau}$ does not have an $n^2$ term in the denominator to cancel out the numerator. This is because the objective function is not appropriately scaled with $n$ like it does in the structural SVM formulation.}.
\end{example}


\subsection{Convergence with delayed updates}\label{sec:delayed_gradient}
Due to the delays in communication, it happens all the time that some updates pushed back by workers are calculated based on delayed parameters that we broadcast earlier. Dropping these updates or enforcing synchronization will create a huge system overhead especially when the size of the minibatch is small. Ideally, we want to just accept the delayed updates as if they are correct, and broadcast new parameters to workers without locking the updates. The question is, does it actually work? In this section, we model the delay from every update to be iid from an unknown distribution. Under weak assumptions, we show that the effect of delayed updates can be treated as a form of approximate oracle evaluation as in \eqref{eq:approx_add} with some specific constant $\delta$ that depends on the expected delay $\kappa$ and the maximum delay parameter $\kappa_{\max}$ (when exists), therefore establishing that the convergence results in the previous section remains valid for this variant. The results will also depend on the following diameter and gradient Lipschitz constant for a norm $\|\cdot\|$
\begin{align*}
D_{\|\cdot\|}^{(S)}  &= \sup_{x,y\in \cM^{(S)}} \|x-y\|,\\
L_{\|\cdot\|}^{(S)}  &=  \sup_{\begin{subarray}{l} x,y\in \cM, y= x+s\\\|s\|\leq \gamma,\\s \in \mathrm{span}(\cM^{(S)})\end{subarray}} \frac{1}{\gamma^2} (f(y) - f(x) - \langle y-x, \nabla f(x) \rangle),\\
D_{\|\cdot\|}^\tau  &= \max_{S\subset [n] \big| |S|= m}  D_{\|\cdot\|}^{(S)}, \text{ and }
	L_{\|\cdot\|}^\tau  = \max_{S\subset [n] | |S|= m} L_{\|\cdot\|}^{(S)}.
\end{align*}

\begin{theorem}[Delayed Updates as Approximate Oracle]\label{thm:delay}
For each norm $\|\cdot\|$ of choice, let $D_{\|\cdot\|}^\tau$ and $L_{\|\cdot\|}^\tau$ be defined above. Let the a random variable of delay be $\varkappa$ and let $\kappa := \E\varkappa $ be the expected delay from any worker, 
 moreover, assume that the algorithm drops any updates with delay greater than $k/2$ at iteration $k$.  Then for the version of the algorithm without line-search, the delayed oracle will produce $s \in \cM^{(S)}$ such that \eqref{eq:approx_add} holds with 
\begin{equation}\label{eq:thm_delay_bound1}
\delta = \frac{4\kappa \tau L_{\|\cdot\|}^{1}D_{\|\cdot\|}^1D_{\|\cdot\|}^\tau}{C_f^\tau}.
\end{equation}
Furthermore, if we assume that there is a $\kappa_{\max}$ such that $\P(\varkappa \leq \kappa_{\max}) =1 $ for all $k$, then \eqref{eq:approx_add} holds with $\delta = c_{n,\tau\kappa_{\max}}\frac{4\tau L_{\|\cdot\|}^{1}D_{\|\cdot\|}^1 \E D_{\|\cdot\|}^{\varkappa\tau}}{C_f^\tau}$ where $c_{n,\tau\kappa_{\max}} =$
\begin{equation}\label{eq:thm_delay_bound2}
\begin{cases}
\frac{3\log n}{\log(n/(\tau\kappa_{\max}))} & \text{ if } \kappa_{\max} \tau < n/\log n,\\
O(\log n) & 
\text{ if } \kappa_{\max} \tau  = O(n\log n),\\
\frac{(1+o(1))\tau\kappa_{\max}}{n} & \text{ if } \kappa_{\max} \tau  \gg  n\log n.
\end{cases}
\end{equation}
\end{theorem}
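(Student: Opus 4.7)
The plan is to cast delayed updates as an additive approximation to the exact block linear oracle, controlled by how much the iterate can move during the delay window. Fix iteration $k$ and let $S$ be the selected minibatch. For each $i \in S$, a worker returned $s_{(i)}$ as the exact minimizer of $\langle s', \nabla_{(i)} f(x^{(k-d_i)})\rangle$ over $\cM_i$, where $d_i$ is that worker's delay. Letting $s_{(i)}^\star$ denote the minimizer with respect to the current gradient, the optimality of $s_{(i)}$ for the delayed gradient together with H\"older's inequality yields
\[
\langle s_{(i)} - s_{(i)}^\star, \nabla_{(i)} f(x^{(k)})\rangle \leq \|s_{(i)} - s_{(i)}^\star\| \cdot \|\nabla_{(i)} f(x^{(k)}) - \nabla_{(i)} f(x^{(k-d_i)})\|_*,
\]
with the two factors bounded respectively by $D_{\|\cdot\|}^1$ and by $L_{\|\cdot\|}^1\,\|x^{(k)} - x^{(k-d_i)}\|$, the latter from the single-block Lipschitz property encoded in $L_{\|\cdot\|}^1$.

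Next I would bound the iterate drift during the delay window. Each server step is $x^{(j)} - x^{(j-1)} = \gamma_j(s_{[S_j]} - x_{[S_j]}^{(j-1)})$, which lives in a $\tau$-block subspace and has norm at most $\gamma_j D_{\|\cdot\|}^\tau$. Summing over $j = k-d_i+1,\dots,k$ and invoking the drop-if-too-old rule, which forces $d_i \leq k/2$ and hence $\gamma_j \leq 2\gamma_k$ throughout the window, gives $\|x^{(k)} - x^{(k-d_i)}\| \leq 2 d_i \gamma_k D_{\|\cdot\|}^\tau$. Summing the per-block oracle error over $i \in S$ and matching against the right-hand side $\tfrac{\delta \gamma_k C_f^\tau}{2}$ of \eqref{eq:approx_add} reduces the problem to controlling $\E\sum_{i \in S} d_i$.

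For the unbounded-delay case this expectation equals $\tau\kappa$ by the iid assumption, and \eqref{eq:thm_delay_bound1} follows (up to an absorbed constant). The bounded-delay case is more delicate: the naive bound $\sum_i d_i \leq \tau\kappa_{\max}$ is loose whenever $\tau\kappa_{\max} < n$, because the drift is actually driven by the set of \emph{distinct} blocks modified during the window rather than by raw per-block delays. Modeling the at most $\tau\kappa_{\max}$ in-flight updates as balls thrown uniformly into the $n$ coordinate-block bins, the effective perturbation on any single block is the maximum bin load, and the classical Raab--Steger max-load theorem yields precisely the three regimes of $c_{n,\tau\kappa_{\max}}$ in \eqref{eq:thm_delay_bound2}. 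Under this refinement the $D_{\|\cdot\|}^\tau$ factor is replaced by $\E D_{\|\cdot\|}^{\varkappa\tau}$, reflecting that the relevant effective subspace dimension is the random max load rather than $\tau\kappa_{\max}$.

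The main obstacle will be rigorously coupling the per-block delays $\{d_i\}_{i \in S}$ with the randomness in block selection. Because a worker's choice of block and its speed jointly determine whether its update lands in a given minibatch $S$, the conditional law of $d_i$ given $i\in S$ is not the unconditional delay distribution, and the server's requirement that the $\tau$ blocks in $S$ be distinct entangles them further. Showing that these dependencies can be decoupled --- and that the balls-in-bins surrogate faithfully captures iterate movement despite them --- is the technical heart of the argument and is what drives the sharp constant $c_{n,\tau\kappa_{\max}}$, which improves exponentially over prior asynchronous-BCD analyses.
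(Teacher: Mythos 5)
Your proposal takes essentially the same route as the paper: the delayed-minimizer optimality plus H\"older bound with the $L_{\|\cdot\|}^1 D_{\|\cdot\|}^1\|x-\tilde x\|$ factorization, the iterate-drift estimate using $\gamma_j\le 2\gamma_k$ from the drop-if-too-old rule, and the Raab--Steger max-load theorem for the bounded-delay refinement are exactly the three ingredients of the paper's proof. One small clarification: in the refined bound the factor $\E D_{\|\cdot\|}^{\varkappa\tau}$ comes from the number of \emph{distinct} blocks touched during the delay window, while the max load enters separately as the multiplier $c_{n,\tau\kappa_{\max}}$ on the accumulated per-block stepsize (the paper makes this precise by collapsing repeated updates to the same block into a single Frank--Wolfe step via a convex-combination lemma); your write-up conflates these two roles, but the resulting bound and argument are the same.
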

The results above imply that \algo (without line-search) converges in both primal optimality and in duality gap according to Theorem~\ref{thm:PFW_primal}~and~\ref{thm:PFW_primal-dual}.

Note that \eqref{eq:thm_delay_bound1} depends on the expected delay rather than the maximum delay and as $k\rightarrow \infty$ we allow the maximum delay to grow unboundedly. This allows the system to automatically deal with heavy-tailed delay distribution and sporadic stragglers. When we do have small bounded delay, we produce stronger bounds \eqref{eq:thm_delay_bound2} with a multiplier that is either a constant (when $\tau\kappa_{\max} = O(n^{1-\epsilon})$ for any $\epsilon>0$), proportional to $\log n$ (when $\tau\kappa\leq n$) or proportional to $\frac{\tau\kappa_{\max}}{n}$ (when $\tau\kappa$ is large). The whole expression often has sublinear dependency in the expected delay $\kappa$. To be more precise, when $\|\cdot\|$ is Euclidean norm, $\E D_{\|\cdot\|}^{\varkappa\tau} \leq \sqrt{\E \varkappa} D_{\|\cdot\|}^{\tau}$ by Jensen's inequality. Therefore in this case the bound is essentially proportional to $\sqrt{\kappa}$. This is strictly better than \citet{niu2011hogwild} which has quadratic dependency in $\kappa_{\max}$ and \citet{liu2013asynchronousCD} which has exponential dependency in $\kappa_{\max}$. Our mild $\kappa_{\max}$ dependency for the cases $\tau\kappa_{\max} > n$ suggests that the \eqref{eq:thm_delay_bound2} remains proportional to $\sqrt{\kappa}$ even when we allow the maximum delay parameter to be as large as $\frac{n}{\tau}$ or larger without significantly affecting the convergence. Note that this allows some workers to be delayed for several data passes.

Observe that when $\tau=1$, where the results reduces to a lock-free variant for \bcfw, $\delta$ becomes proportional to $\frac{L^1_\|\cdot\|[D^1_\|\cdot\|]^2}{C_f^1}$. This is always greater than $1$ (see e.g., \citep[Appendix D]{jaggi2013revisiting}) but due to the flexibility of choosing the norm, this quantity corresponding to the most favorable norm is typically a small constant. For example, when $f$ is a quadratic function, we show that $C_f^1 = L^1_\|\cdot\|[D^1_\|\cdot\|]^2$ (see Appendix~\ref{sec:lipschitz_curvature}). When $\tau > 1$, $\frac{\tau L_{\|\cdot\|}^{1}D_{\|\cdot\|}^1D_{\|\cdot\|}^\tau}{C_f^\tau}$ is often $O(\sqrt{\tau})$ for an appropriately chosen norm. Therefore, \eqref{eq:thm_delay_bound1} and \eqref{eq:thm_delay_bound2} are roughly in the order of $O(\kappa\sqrt{\tau})$ and $O(\sqrt{\kappa\tau})$ respectively\footnote{For details, see our discussion in Appendix~\ref{sec:lipschitz_curvature}}.

Lastly, we remark that $\kappa$ and $\tau$ are not independent. When we increase $\tau$, we update the parameters less frequently and $\kappa$ gets smaller. In a real distributed system, with constant throughput in terms of number of oracle solves per second from all workers. If the average delay is a fixed number in clock time specified by communication time. Then $\tau\kappa$ is roughly a constant regardless how $\tau$ is chosen.

\section{Experiments}
\label{sec:exp}

In this section, we experimentally demonstrate the performance gains of the three key features of our algorithm: minibatches of data, parallel workers, and asynchronous updates.

\subsection{Minibatches of Data}\label{sec:minibatch}
We conduct simulations to study the effect of mini-batch size $\tau$, where larger $\tau$ implies greater degrees of parallelism as each worker can solve one or more subproblems in a mini-batch. In our simulation for structural SVM we use sequence labeling task on a subset of the OCR dataset \citep{koller2003max} $(n=6251,d=4082)$. The subproblem can be solved using the Viterbi algorithm. The speedup on this dataset is shown in Figure~\ref{fig:speedup_simulation}(a). For this dataset, we use $\lambda=1$ with weighted averaging and line-search throughout. We measure the speedup for a particular $\tau>1$ in terms of the number of epochs (Algorithm~\ref{alg:PFW}) required to converge relative to $\tau=1$, which corresponds to \bcfw. Figure~\ref{fig:speedup_simulation}(a) shows that \algo achieves linear speedup for mini-batch size up to $\tau \approx 50$. Further speedup is sensitive to the convergence criteria, where more stringent thresholds lead to lower speed-ups. This is because large mini-batch sizes introduce errors, which reduces progress per update, and is consistent with existing work on the effect of parameter staleness on convergence~\citep{ho13,essp14}. This suggests that it might be possible to use more workers initially for a large speedup and reduce parallelism as the algorithm approaches the optimum.

In our simulation for Group Fused Lasso, we generate a piecewise constant dataset of size ($n=100, d=10$, in Eq.~\ref{eg:group_fused_lasso}) with Gaussian noise. We use $\lambda=0.01$ and a primal suboptimality threshold as our convergence criterion. At each iteration, we solve $\tau$ subproblems (i.e. the mini-batch size). Figure \ref{fig:speedup_simulation}(b) shows the speed-up over $\tau=1$ (\bcfw). Similar to the structural SVM, the speedup is almost perfect for small $\tau$ ($\tau \le 55$) but tapers off for large $\tau$ to varying degrees depending on the convergence thresholds.


\begin{figure}[tb]	
	\centering
	\begin{subfigure}[t]{0.42\textwidth}
		\centering
		\includegraphics[width=\textwidth]{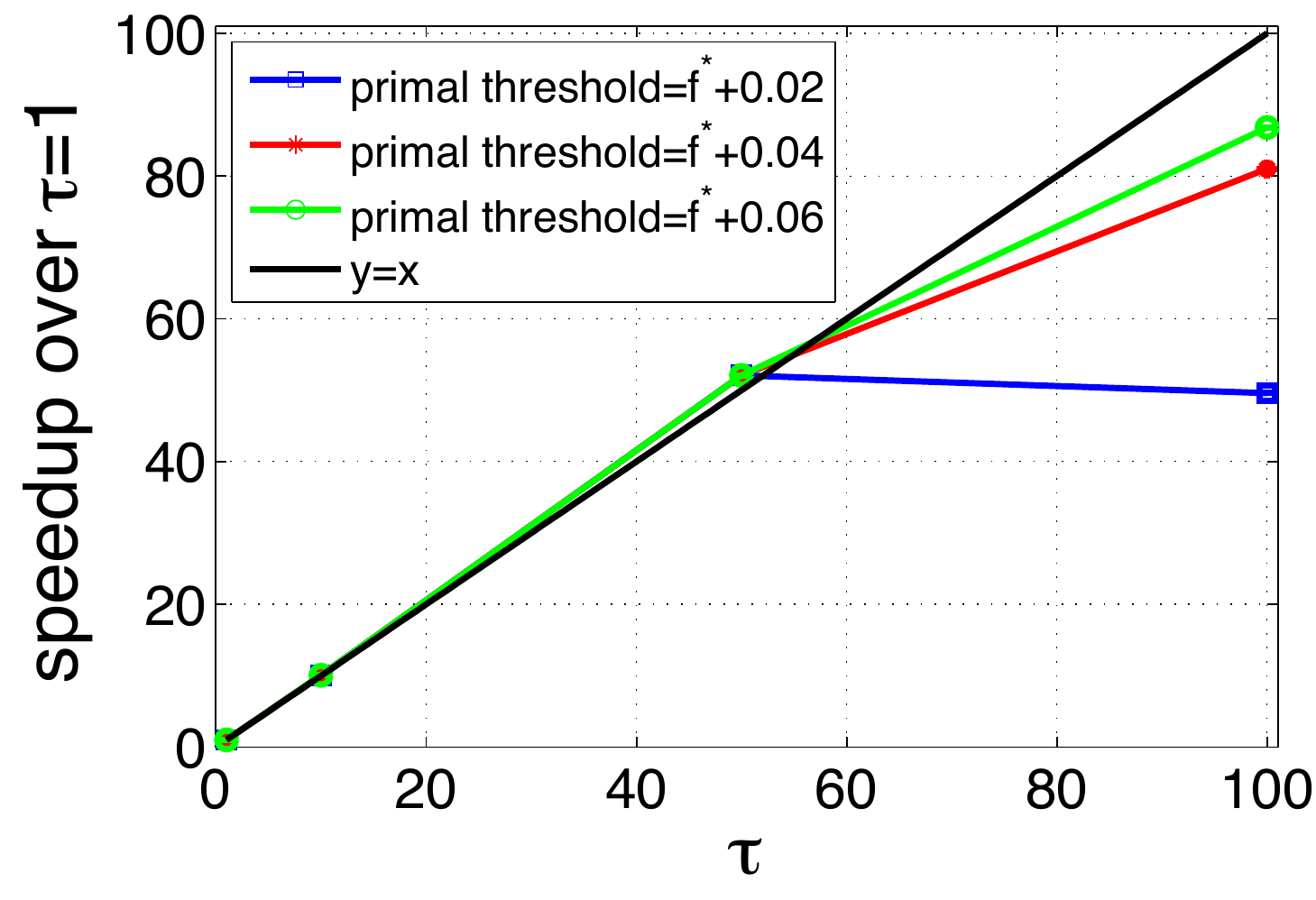}
		 		\caption{\small Structural SVM on OCR dataset (n=6251)}
		\label{fig:speedup_sim_ocr}
	\end{subfigure}
	\begin{subfigure}[t]{0.42\textwidth}
		\centering
		\includegraphics[width=\textwidth]{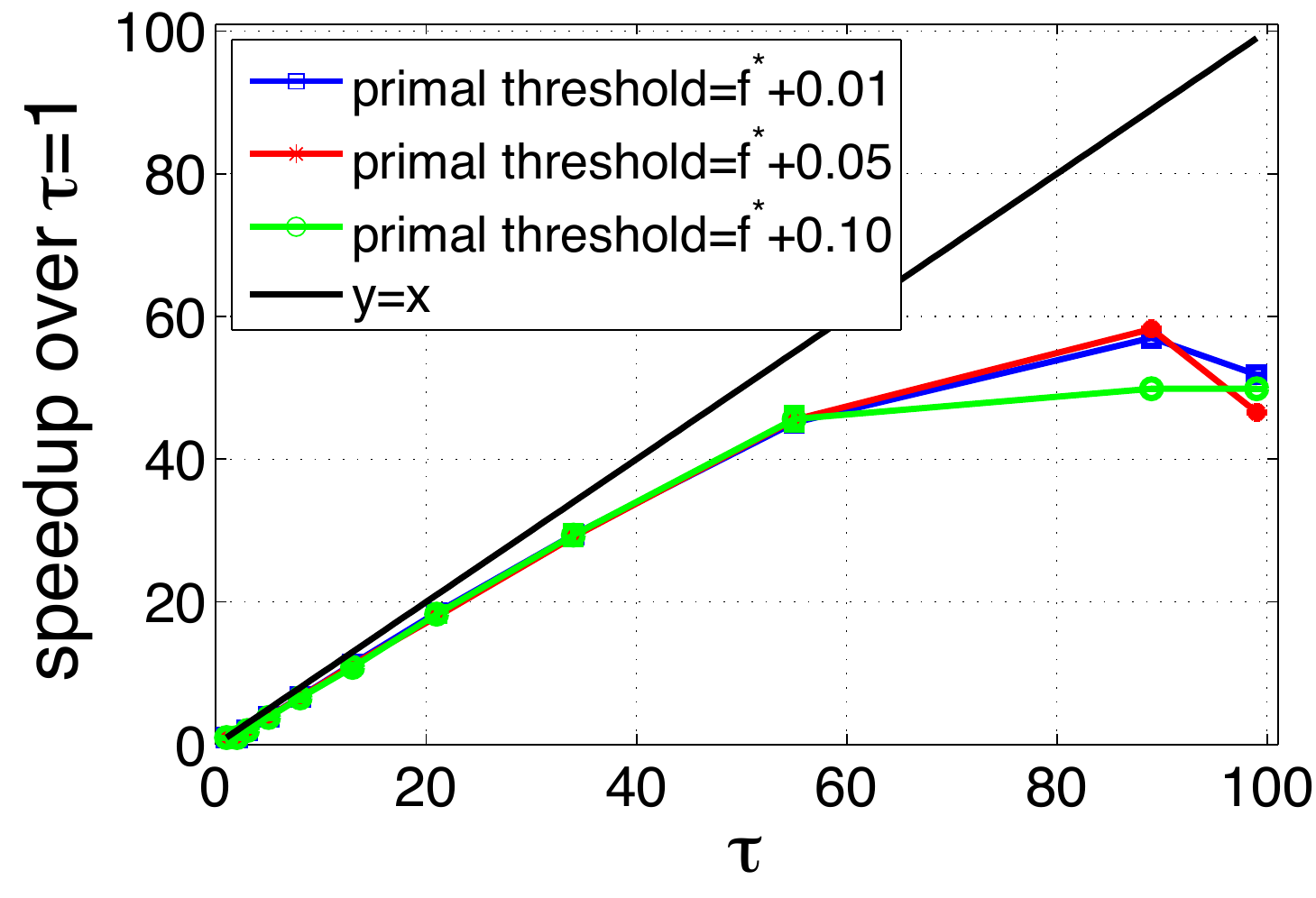}
		 		\caption{\small Group Fused Lasso (n=100)}
		\label{fig:speedup_sim_fused}
	\end{subfigure}
	\caption{\small Performance improvement with $\tau$ for (a) Structual SVM on the OCR dataset~\citep{koller2003max} and (b) Group Fused Lasso on a synthetic dataset. $f^*$ denotes primal optimum.}
	\label{fig:speedup_simulation}
	\vspace{-0.5cm}
\end{figure}

\subsection{Shared Memory Parallel Workers}
We implement \algo for the structural SVM in a multicore shared-memory system using the full OCR dataset $(n=6877)$. All shared-memory experiments were implemented in C++ and conducted on a 16-core machine with Intel(R) Xeon(R) CPU E5-2450 2.10GHz processors and 128G RAM. We first fix the number of workers at $T=8$ and vary the mini-batch size $\tau$. Figure \ref{fig:primal_shared_memory}(a) shows the absolute convergence (i.e. the convergence per second). We note that \algo outperforms single-threaded \bcfw under all investigated $\tau$, showing the efficacy of parallelization. Within \algo, convergence improves with increasing mini-batch sizes up to $\tau = 3T$, but worsens when $\tau=5T$ as the error from the large mini-batch size dominates additional computation. The optimal $\tau$ for a given number of workers ($T$) depends on both the dataset (how ``coupled'' are the coordinates) and also system implementations (how costly is the synchronization as the system scales).

Since speedup for a given $T$ depends on $\tau$, we search for the optimal $\tau$ across multiples of $T$ to find the best speedup for each $T$. Figure \ref{fig:primal_shared_memory}(b) shows faster convergence of \algo over \bcfw ($T=1$) when $T>1$ workers are available. It is important to note that the x-axis is \emph{wall-clock time} rather than the number of epochs.

Figure \ref{fig:primal_shared_memory}(c) shows the speedup with varying $T$. \algo achieves near-linear speed up for smaller $T$. The speed-up curve tapers off for larger $T$ for two reasons: (1) Large $T$ incurs higher system overheads, and thus needs larger $\tau$ to utilize CPU efficiently; (2) Larger $\tau$ incurs errors as shown in Fig.~\ref{fig:speedup_simulation}(a). 
If the subproblems were more time-consuming to solve, the affect of system overhead would be reduced. We simulate harder subproblems by simply solving them $m\sim $Uniform$(5,15)$ times instead of just once. The speedup is nearly perfect as shown in Figure \ref{fig:primal_shared_memory}(d).
Again, we observe that a more generous convergence threshold produces higher speedup, suggesting that resource scheduling could be useful (e.g., allocate more CPUs initially and fewer as algorithm converges).



\begin{figure*}[t]	
	\centering
	\begin{subfigure}[t]{0.42\textwidth}
		\centering
		\includegraphics[width=\textwidth]{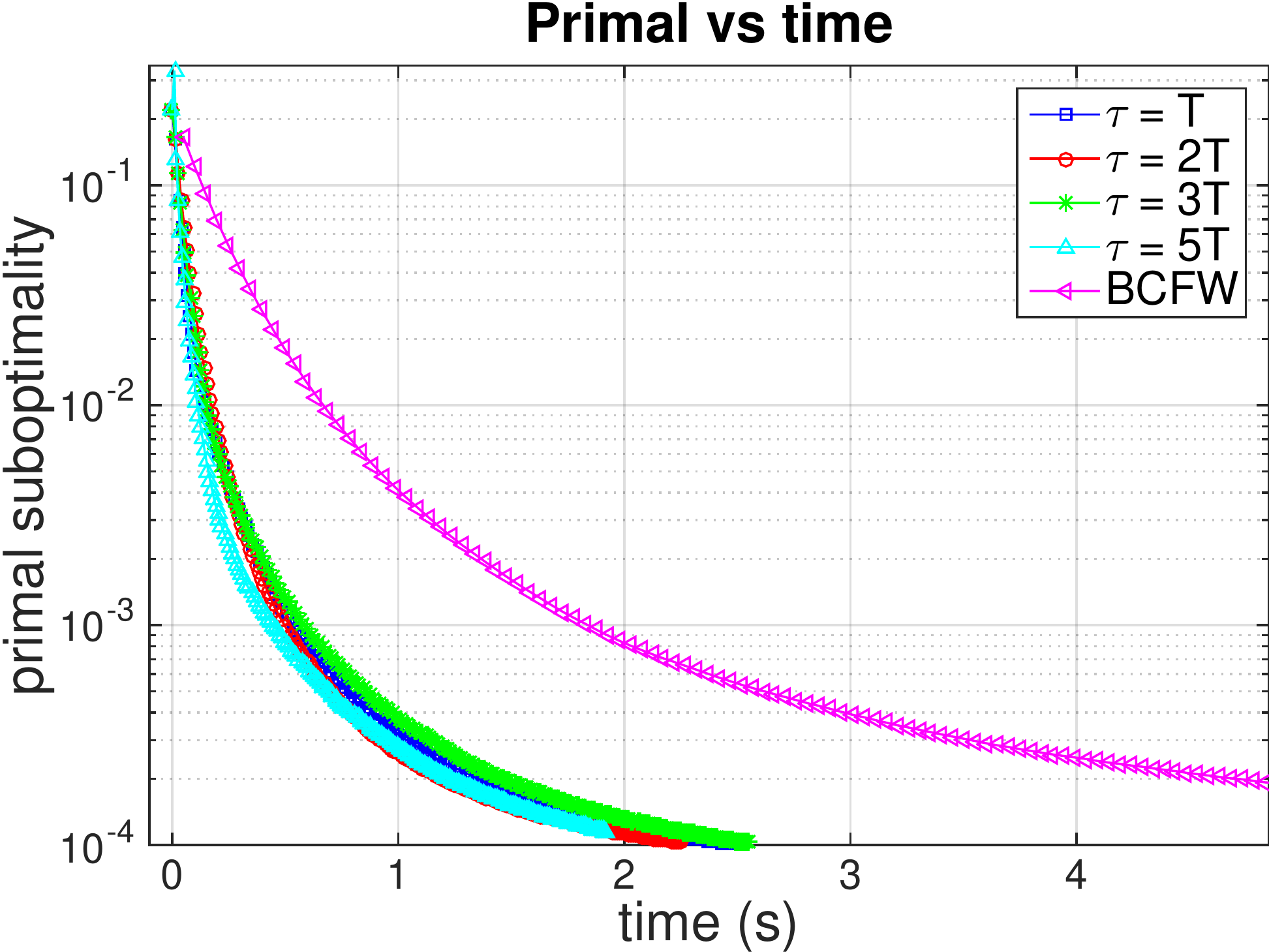}
		
		\vspace{-2mm}
		 		\caption{\small }
		\label{fig:shared_memory01}
	\end{subfigure}
	\begin{subfigure}[t]{0.42\textwidth}
		\centering
		\includegraphics[width=\textwidth]{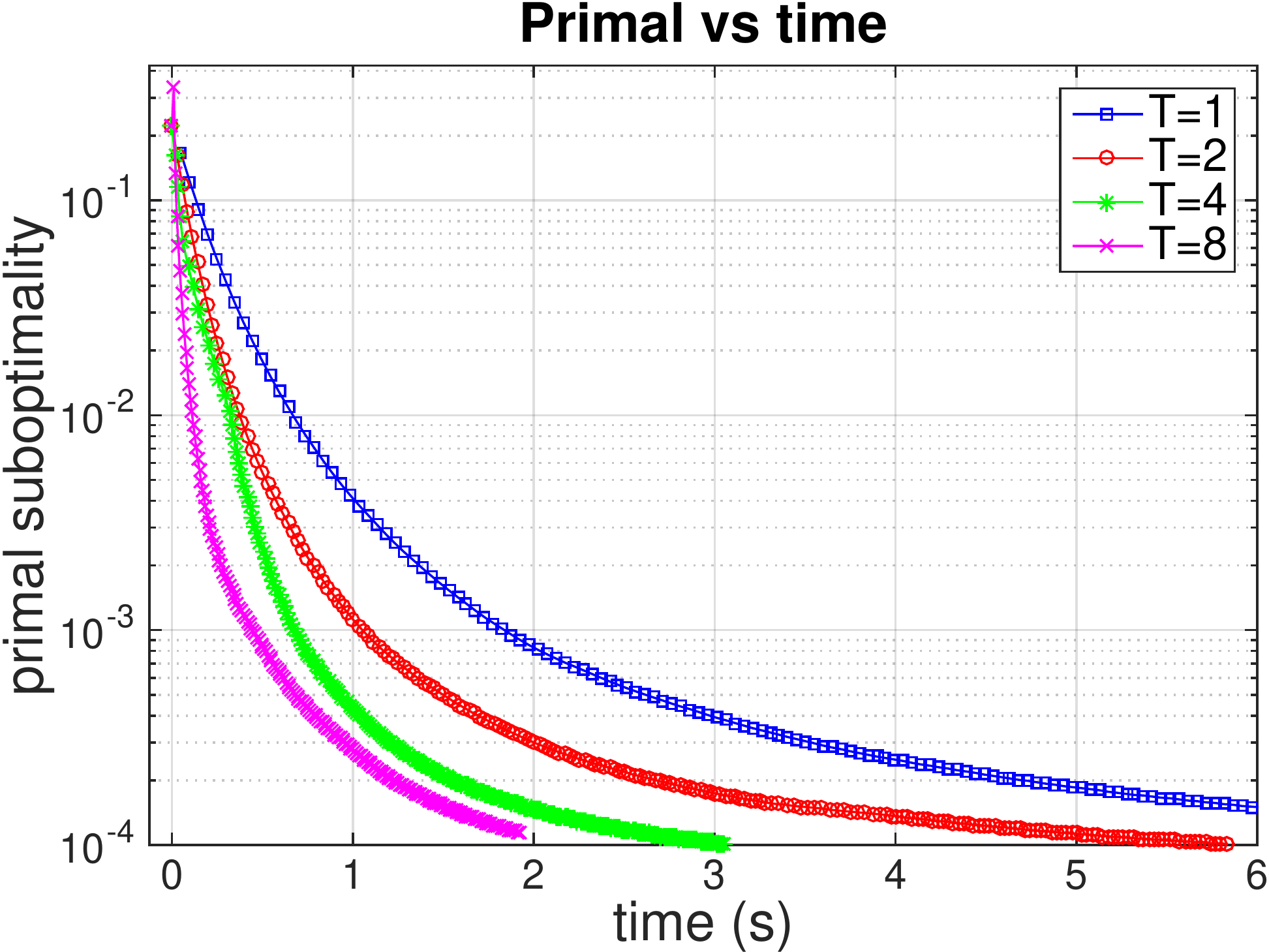}
	
	\vspace{-2mm}
		 		\caption{\small }
		\label{fig:shared_memory03}
	\end{subfigure}\\
	\begin{subfigure}[t]{0.42\textwidth}
		\centering
		\includegraphics[width=\textwidth,height= 0.75\textwidth]{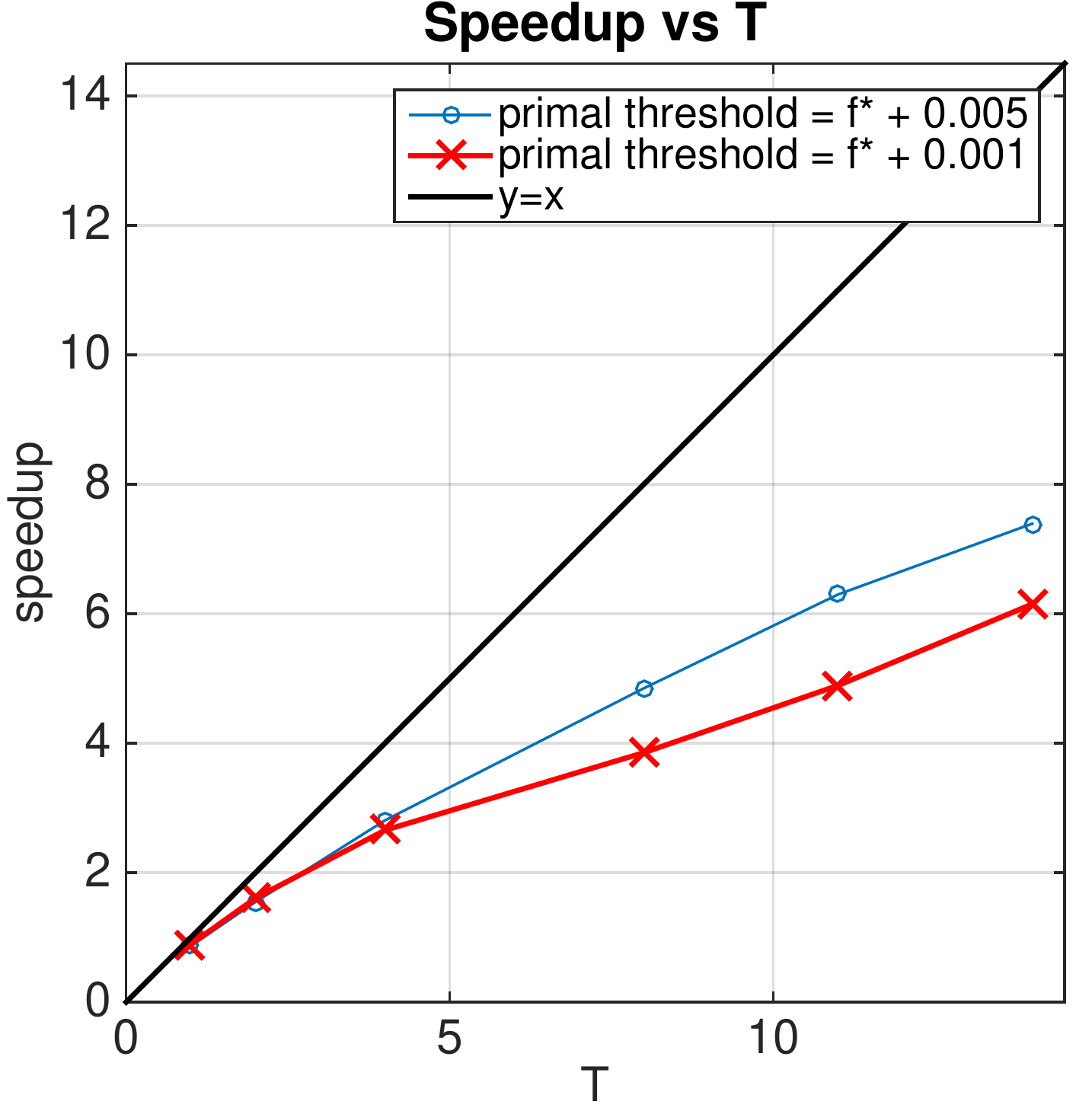}
		
		\vspace{-2mm}
		 		\caption{\small }
		\label{fig:shared_memory02}
	\end{subfigure}	
	\begin{subfigure}[t]{0.42\textwidth}
		\centering
		\includegraphics[width=\textwidth,height = 0.75\textwidth]{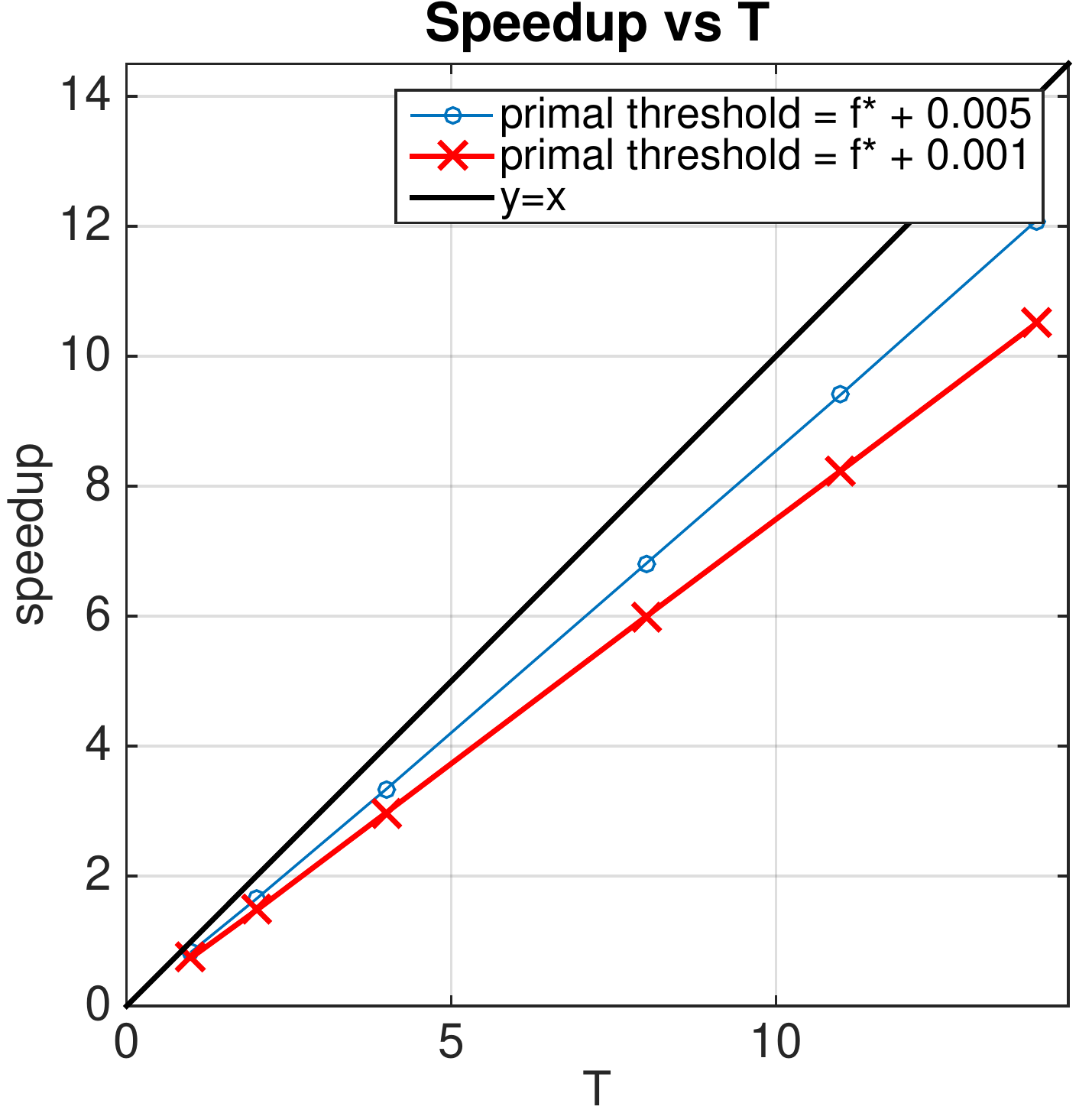}
		
				\vspace{-2mm}
		 		\caption{\small }
		\label{fig:shared_memory04}
	\end{subfigure}	
	\vspace{-0.5cm}
	\caption{\small From left: \textbf{(a)} Primal suboptimality vs wall-clock time using 8 workers ($T=8$) and various mini-batch sizes $\tau$.  \textbf{(b)} Primal suboptimality vs wall-clock time for varying $T$ with best $\tau$ chosen for each $T$ separately. \textbf{(c)} Speedup via parallelization with the best $\tau$ chosen among multiples of $T$ ($T, 2T,...$) for each $T$. \textbf{(d)} The same with longer subproblems.}
	\label{fig:primal_shared_memory}
	\vspace{-0.5cm}
\end{figure*}

\vspace{-1mm}
\subsection{Performance gain with asynchronous updates}
We compare \algo with a synchronous version of the algorithm (\spalgo) where the server assigns $\tau/T$ subproblems to each worker, then waits for and accumulates the solutions before proceeding to the next iteration. We simulate workers of varying slow-downs in our shared-memory setup by assigning a \emph{return probability} $p_i\in (0,1]$ to each worker $w_i$. After solving each subproblem, worker $w_i$ reports the solution to the server with probability $p_i$. Thus a  worker with $p_i = 0.8$ will drop 20\% of the updates on average corresponding to $20\%$ slow-down. 

%
%

We use $T=14$ workers for the experiments in this section. We first simulate the scenario with just one straggler with return probability $p\in (0,1]$ while the other workers run at full speed $(p=1)$. Figure \ref{fig:async_vs_sync}(a) shows that the average time per effective datapass (over 20 passes and 5 runs) of \algo stays almost unchanged with slowdown factor $1/p$ of the straggler, whereas it increases linearly for \spalgo. This is because \algo relies on the average available worker processing power, while \spalgo is only as fast as the slowest worker.

Next, we simulate a heterogeneous environment where the workers have varying speeds. While varying a parameter $\theta \in [0,1]$, we set $p_i = \theta + i/T$ for $i=1,\cdots,T$. Figure \ref{fig:async_vs_sync}(b) shows that \algo slows down by only a factor of 1.4 compared to the no-straggler case. Assuming that the server and worker each takes about half the (wall-clock) time on average per epoch, we would expect the run time to increase by 50\% if average worker speed halves, which is the case if $\theta = 0$ (i.e., $\frac{1}{\theta}\rightarrow \infty$).
Therefore a factor of 1.4 is reasonable. The performance of \spalgo is almost identical to that in the previous experiment as its speed is determined by the slowest worker. Thus our experiments show that \algo is robust to stragglers and system heterogeneity.

\begin{figure}[t]	
	\centering
	\includegraphics[width=0.9\textwidth]{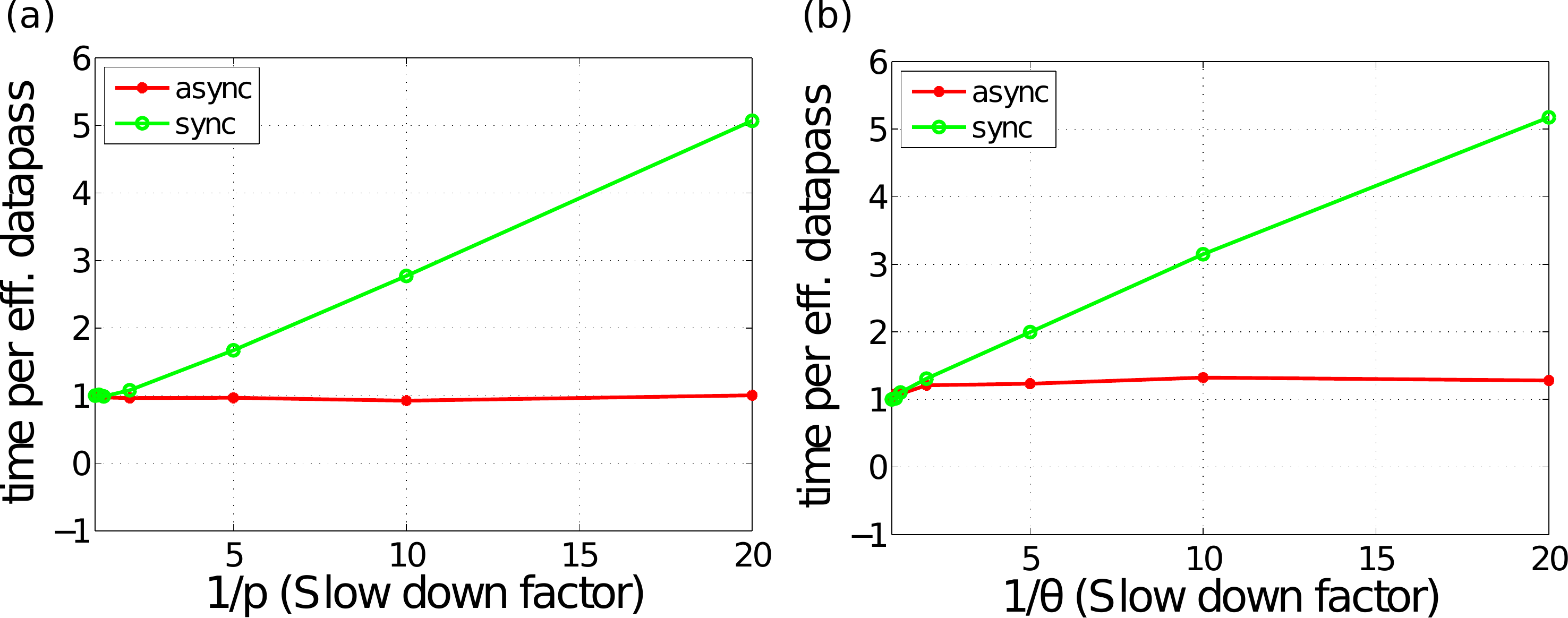}
	\caption{\small Average time per effective data pass in asynchronous and synchronous modes for two cases: one worker is slow with return probability $p$ (left); workers have return probabilities $(p_i$s$)$ uniformly in $[\theta,1]$ (right). Times normalized separately for \algo, \spalgo w.r.t.\ to the setup where workers run at full speed.} \label{fig:async_vs_sync}
	\vspace{-0.5cm}
\end{figure}

\subsection{Convergence under unbounded heavy-tailed delay}
In this section, we illustrate the mild effect of delay on convergence by randomly drawing an independent delay variable for each worker. For simplicity, we use $\tau = 1$ (\bcfw) on the same group fused lasso problem as in Section~\ref{sec:minibatch}. We sample $\varkappa$ using either a Poisson distribution or a heavy-tailed Pareto distribution (round to the nearest integer). The Pareto distribution is chosen with shape parameter $\alpha=2$ and scale parameter $x_m=\kappa/2$ such that $\E \varkappa =\kappa$ and $\Var\varkappa = \infty$. During the experiment, at iteration $k$, any updates that were based on a delay greater than $k/2$ are dropped (as our theory demanded). The results are shown in Figure~\ref{fig:delay}. Observe that for both cases, the impact of the delay is rather mild. With expected delay up to $20$, the algorithm only takes fewer than twice as many iterations to converge.
\begin{figure}[t]	
	\centering
	\includegraphics[width=0.45\textwidth]{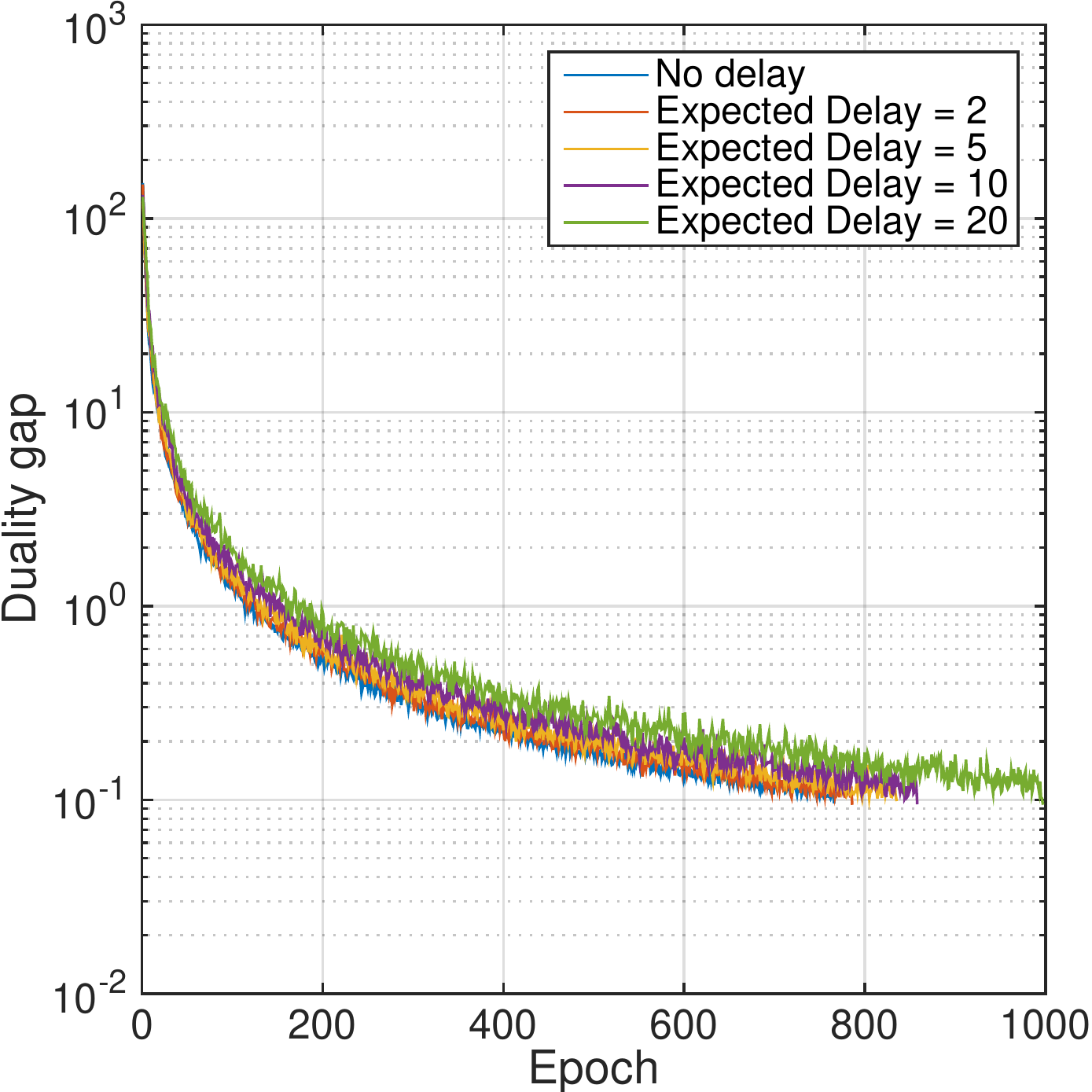}
	\includegraphics[width=0.45\textwidth]{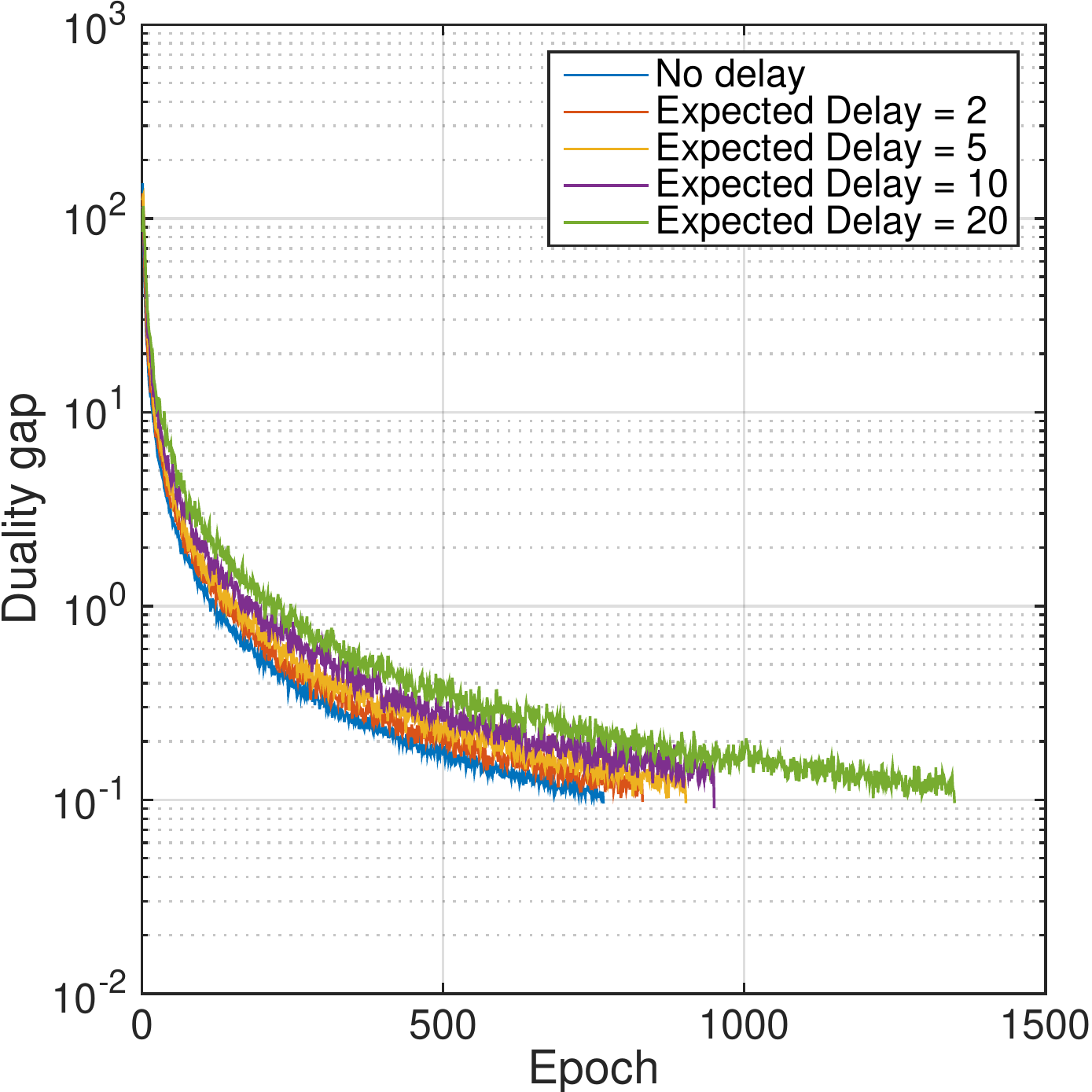}
	\caption{\small Illustrations of the convergence \bcfw with delayed updates. On the left, we have the delay sampled from a Poisson distribution. The figure on the right is for delay sampled from a Pareto distribution. We run each problem until the duality gap reaches $0.1$.} \label{fig:delay}
	\vspace{-0.5cm}
\end{figure}

\vspace{-2mm}
\section{Conclusion}
In this paper, we propose an asynchronous parallel generalization of the block-coordinate Frank-Wolfe method~\citep{lacoste2013block} and provide intuitive conditions under which it has a provable speed-up over \bcfw. The asynchronous updates allow our method to be robust to stragglers and node failure as the speed of \algo depends on {\em average} worker speed instead of the slowest. We demonstrate the effectiveness of the algorithm in structural SVM and Group Fused Lasso with both controlled simulation and real-data experiments on a multi-core workstation. For the structural SVM, it leads to a speed-up over the state-of-the-art \bcfw by an order of magnitude using 16 parallel processors. As a projection-free Frank-Wolfe method, we expect our algorithm to be very competitive in large-scale constrained optimization problems, especially when projections are expensive. Future work includes analysis for the strongly convex case and ultimately releasing a carefully implemented software package for practitioners to deploy in Big Data applications.


\bibliographystyle{apa-good}
\bibliography{FrankWolfe}

\appendix
\section{Convergence analysis}\label{sec:conv_proof}
We provide a self-contained convergence proof in this section. The skeleton of our convergence proof follow closely from \citet{lacoste2013block} and \citet{jaggi2013revisiting}. There are a few subtle modification and improvements that we need to add due to our weaker definition of approximate oracle call that is nearly correct only in expectation. The delayed convergence is new and interesting for the best of our knowledge, which uses a simple result in ``load balancing'' \citep{mitzenmacher2001power}.

 Note that for the cleanness of the presentation, we focus on the primal and primal-dual convergence of the version of the algorithms with pre-defined step sizes and additive approximate subroutine, it is simple to extend the same analysis for line-search variant and multiplicative approximation.

\subsection{Primal Convergence}

\begin{lemma}\label{lem:recursion_PFW}
	Denote the gap between current $f(x^{(k)})$ and the optimal $f(x^*)$ to be $h(x^{(k)})$. The iterative updates in Algorithm~\ref{alg:PFW}
	(with arbitrary fixed stepsize $\lambda$ or by the coordinate-line search) obey
	$$
	\E h(x^{(k+1)}) \leq  (1-\frac{\gamma\tau}{n})\E h(x^{(k)})  + \frac{\gamma^2(1+\delta)}{2}C_f^\tau.
	$$
	where the expectation is taken over the joint randomness all the way to iteration $k+1$.
\end{lemma}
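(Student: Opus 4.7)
The plan is to combine three ingredients: the definition of set curvature $C_f^{(S)}$, the approximate-oracle property~\eqref{eq:approx_add}, and the uniform random choice of the size-$\tau$ subset $S$. Throughout I condition on the history up to iteration $k$ and take expectation only over the random subset $S$ and the oracle randomness; a final outer expectation recovers the stated bound.

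First I would apply the set-curvature bound~\eqref{eq:curv_S} to $x^{(k+1)} = x^{(k)} + \gamma(s_{[S]} - x_{[S]}^{(k)})$, which gives the quadratic upper bound
\begin{equation*}
 f(x^{(k+1)}) \leq f(x^{(k)}) + \gamma \langle s_{(S)} - x_{(S)}^{(k)}, \nabla_{(S)} f(x^{(k)})\rangle + \tfrac{\gamma^2}{2} C_f^{(S)}.
\end{equation*}
Taking expectation over the uniform choice of $S$ with $|S| = \tau$ turns the curvature term into $\tfrac{\gamma^2}{2} C_f^\tau$ by definition~\eqref{eq:curv_tau}. For the linear term, I would split it into the exact-minimizer value plus the oracle error: writing $s^\star \in \arg\min_{s \in \cM}\langle s, \nabla f(x^{(k)})\rangle$ (which decomposes blockwise), the approximation condition~\eqref{eq:approx_add} controls $\E\langle s_{(S)} - s^\star_{(S)}, \nabla_{(S)} f(x^{(k)})\rangle$ by $\tfrac{\delta\gamma C_f^\tau}{2}$.

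Next I would exploit the key combinatorial identity: because $S$ is uniform over size-$\tau$ subsets of $[n]$, each block is included with probability $\tau/n$, and consequently
\begin{equation*}
 \E_S \langle s^\star_{(S)} - x_{(S)}^{(k)}, \nabla_{(S)} f(x^{(k)})\rangle = \tfrac{\tau}{n}\langle s^\star - x^{(k)}, \nabla f(x^{(k)})\rangle.
\end{equation*}
Convexity of $f$ then yields $\langle s^\star - x^{(k)}, \nabla f(x^{(k)})\rangle \leq -h(x^{(k)})$, i.e., the inner product is at most the negative duality gap. Collecting terms gives, still conditional on $x^{(k)}$,
\begin{equation*}
 \E f(x^{(k+1)}) \leq f(x^{(k)}) - \tfrac{\gamma\tau}{n} h(x^{(k)}) + \tfrac{\gamma^2 \delta}{2}C_f^\tau + \tfrac{\gamma^2}{2}C_f^\tau,
\end{equation*}
and subtracting $f(x^\star)$ from both sides produces the required one-step contraction; a final tower-property expectation over the history up to step $k$ gives the lemma.

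The main subtlety—really the only nontrivial step—is that the approximate oracle~\eqref{eq:approx_add} only holds \emph{in expectation}, not pointwise as in \citet{lacoste2013block} and \citet{jaggi2013revisiting}. This matters because I cannot invoke it inside the curvature inequality before taking expectation; I must first take expectation over the oracle randomness and the subset $S$ jointly, then bound the oracle gap by $\tfrac{\delta\gamma C_f^\tau}{2}$ as a single lumped term. Otherwise everything is a direct adaptation of the \bcfw recursion with the factor $1/n$ replaced by $\tau/n$ from the combinatorial averaging, and with $C_f^{(i)}$ replaced by the expected set curvature $C_f^\tau$. No separate argument is needed for the line-search variant, since line search only improves the objective and hence the same inequality continues to hold.
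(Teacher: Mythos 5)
Your proposal is correct and follows essentially the same route as the paper's proof: curvature bound on the step, splitting the linear term into the exact-oracle value (which averages to $\tfrac{\tau}{n}$ times the surrogate duality gap, itself lower-bounded by $h(x^{(k)})$) plus the oracle error controlled by~\eqref{eq:approx_add}, with the expectation taken over the full history precisely because the oracle guarantee only holds in expectation. Your closing remark on the line-search variant also matches the paper's one-line treatment.
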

\begin{proof}
	Let $x := x^{(k)}$ for notational convenience. We prove the result for Algorithm ~\ref{alg:PFW} first. Apply the definition of $C_f^{(S)}$ and then apply the definition of the additive approximation in \eqref{eq:approx_add}, to get
	\begin{align*}
	f(x^{(k+1)}_{\mathrm{line-search}}) &\leq f(x^{(k+1)}_{\gamma}) = f(x+\gamma\sum_{i\in S}( s_{[i]} - x_{[i]}))\\
	&\leq f(x) +\gamma\sum_{i\in S}\langle s_{[i]}- x_{[i]} , \nabla_{[i]}f(x)\rangle +\frac{\gamma^2}{2} C_f^{(S)}\\
	&= f(x) +\gamma\langle s_{[S]}- x_{[S]} , \nabla_{[S]}f(x)\rangle +\frac{\gamma^2}{2} C_f^{(S)}
	\end{align*}
	Subtract $f(x^*)$ on both sides we get:
\begin{align*}
h(x^{(k+1)})\leq h(x^{(k)}) +\gamma\langle s_{[S]}- x^{(k)}_{[S]} , \nabla_{[S]}f(x^{(k)})\rangle +\frac{\gamma^2}{2} C_f^{(S)}
\end{align*}
Now take the expectation over the entire history then apply \eqref{eq:approx_add} and definition of the surrogate duality gap \eqref{eq:duality_gap}, we obtain
\begin{align}
\E h(x^{(k+1)})\leq& \E h(x^{(k)}) +\E \left\{\gamma\langle s_{[S]}- x^{(k)}_{[S]} , \nabla_{[S]}f(x^{(k)})\rangle\right\}  +\E\frac{\gamma^2}{2} C_f^{(S)}\nonumber\\
=& \E h(x^{(k)}) +\gamma\E \left\{\langle s_{[S]}, \nabla_{[S]}f(x^{(k)})\rangle - \min_{s\in \cM^{(S)}}\langle s, \nabla_{[S]}f(x^{(k)})\rangle\right\} \nonumber\\
&  - \gamma\E \left\{\langle x^{(k)}_{[S]},\nabla_{[S]}f(x^{(k)})\rangle - \min_{s\in \cM^{(S)}}\langle s, \nabla_{[S]}f(x^{(k)})\right\} +\frac{\gamma^2}{2} C_f^{\tau}\nonumber\\
\leq& \E h(x^{(k)})  + \frac{\gamma^2\delta}{2}C_f^\tau - \gamma\E_{x^{k}} \E_{S|x^{k}} \sum_{i\in S} g^{(i)}(x^{(k)}) +\frac{\gamma^2}{2} C_f^{\tau}\nonumber\\
=& \E h(x^{(k)})  + \frac{\gamma^2\delta}{2}C_f^\tau - \gamma\E_{x^{k}} \frac{\tau}{n} g(x^{(k)}) +\frac{\gamma^2}{2} C_f^{\tau} \label{eq:lem_recursive_g}\\
\leq &  (1-\frac{\gamma\tau}{n})\E h(x^{(k)})  + \frac{\gamma^2(1+\delta)}{2}C_f^\tau.\nonumber
\end{align}
The last inequality follows from the property of the surrogate duality gap $g(x^{(k)})\geq h(x^{(k)})$ due to the fact that $g(x)=f(x)-f^*(\cdot)$. This completes the proof of the descent lemma.
\end{proof}


Now we are ready to state the proof for Theorem~\ref{thm:PFW_primal}.
\begin{proof}[{\bf Proof of Theorem~\ref{thm:PFW_primal}}]
	We follow the proof in Theorem~C.1 in \citet{lacoste2013block} to prove the statement for Algorithm~\ref{alg:PFW}. The difference is that we use a different and carefully chosen sequence of step size.
	
	Take
	$C= h_0 + n(1+\delta)C^{\tau}_f,$
	and denote $\E h(x^{(k)})$ as $h_k$ for short hands.
	The inequality in Lemma~\ref{lem:recursion_PFW} simplifies to
	$$
	h_{k+1} \leq \left(1-\frac{\gamma\tau}{n}\right)h_k + \frac{\gamma^2}{2n} C.
	$$
	Now we will prove $h_k\leq \frac{2nC}{\tau^2 k + 2n}$ for $\gamma_k=\frac{2n\tau}{\tau^2k+2n}$ by induction.
	The base case $k=0$ is trivially true since $C> h_0$. Assuming that the claim holds for $k$, we apply the induction hypothesis and the above inequality is reduced to
	\begin{align*}
	h_{k+1} &\leq (1-\frac{\gamma\tau}{n})h_k + \frac{\gamma^2}{2n} C
	\leq \frac{2nC}{\tau^2 k + 2n} \left[1-\frac{\gamma\tau}{n}+\frac{\tau^2k+2n}{2n}\frac{\gamma^2}{2n}\right]\\
	&=\frac{2nC}{\tau^2 k + 2n}\left[ \frac{\tau^2 k + 2n}{\tau^2 k + 2n} - \frac{2n\tau}{\tau^2 k + 2n}\cdot\frac{\tau}{n} + \frac{(2n\tau)^2}{4n^2(\tau^2k+2n)}\right]\\
	&=\frac{2nC}{\tau^2 k + 2n} \cdot \frac{\tau^2 k + 2n - \tau^2}{\tau^2 k + 2n}
	\leq \frac{2nC}{\tau^2 k + 2n} \cdot \frac{\tau^2 k + 2n - \tau^2 + \tau^2}{\tau^2 k + 2n + \tau^2}\\
	&=\frac{2nC}{\tau^2(k+1)+2n}.
	\end{align*}
	This completes the induction and hence the proof for the primal convergence for Algorithm~\ref{alg:PFW}.
\end{proof}

\subsection{Convergence of the surrogate duality gap}
\begin{proof}[Proof of Theorem~\ref{thm:PFW_primal-dual}]
	We mimic the proof in \citet[Section C.3]{lacoste2013block} for the analogous result closely, and we will use the same notation for $h_k$and $C$ as in the proof for primal convergence, moreover denote $g_k = \E g(x^{(k)})$ 
	First from \eqref{eq:lem_recursive_g} in the proof of Lemma~\ref{lem:recursion_PFW}, we have
	$$
	h_{k+1}\leq h_k - \frac{\gamma \tau}{n} g_k + \frac{\gamma^2}{2n} C.
	$$
	Rearrange the terms, we get
	\begin{equation}\label{eq:pf_primal_dual_eq1}
	g_k\leq \frac{n}{\gamma \tau} (h_k-h_{k+1}) + \frac{\gamma C}{2\tau}.
	\end{equation}
	The idea is that if we take an arbitrary convex combination of $\{g_1,...,g_K\}$, the result will be within the convex hull, namely between the minimum and the maximum, hence proven the existence claim in the theorem. By choosing weight $\rho_k:=k/S_K$ where normalization constant $S_K = \frac{K(K+1)}{2}$ and taking the convex combination of both side of \eqref{eq:pf_primal_dual_eq1}, we have
	\begin{align}
	\E(\min_{k\in [K]}g_k) &\leq \sum_{k=0}^{K}\rho_k g_k \leq \frac{n}{\tau}\sum_{k=1}^{K}\rho_k(\frac{h_k}{\gamma_k}-\frac{h_{k+1}}{\gamma_k}) + \sum_{k=0}^{K}\rho_k\gamma_k\frac{C}{2\tau}  \nonumber\\
	&=\frac{n}{\tau}(\frac{h_0\rho_0}{\gamma_0} - h_{K+1}\frac{\rho_k}{\gamma_k}) + \frac{n}{\tau} \sum_{k=0}^{K-1} h_{k+1} (\frac{\rho_{k+1}}{\gamma_{k+1}}-\frac{\rho_{k}}{\gamma_k}) +   \sum_{k=0}^{K}\rho_k\gamma_k\frac{C}{2\tau}\nonumber\\
	&\leq  \frac{n}{\tau} \sum_{k=0}^{K-1} h_{k+1} (\frac{\rho_{k+1}}{\gamma_{k+1}}-\frac{\rho_{k}}{\gamma_k}) +   \sum_{k=0}^{K}\rho_k\gamma_k\frac{C}{2\tau}\label{eq:pf_primal_dual_eq2}
	\end{align}
	Note that $\rho_0=0$, so we simply dropped a negative term in last line. Applying the step size $\gamma_k=2n\tau/(\tau^2k+2n)$, we get
	\begin{align*}
	\frac{\rho_{k+1}}{\gamma_{k+1}}-\frac{\rho_{k}}{\gamma_k} & = \frac{k+1}{S_K}\frac{\tau^2(k+1)2n}{2n\tau} - \frac{k}{S_K} \frac{\tau^2k+2n}{2n\tau}\\
	&=\frac{1}{2nS_K\tau} \left[\tau^2(k+1)^2+2n(k+1)-\tau^2k^2-2nk\right]\\
	&=\frac{\tau^2(2k+1)+2n}{2nS_K\tau}.
	\end{align*}
	Plug the above back into \eqref{eq:pf_primal_dual_eq2} and use the bound $h_{k+1}\leq 2n C/(\tau^2(k+1) + 2n)$, we get
	\begin{align*}
	\E(\min_{k\in [K]}g_k) &\leq \sum_{k=0}^{K}\rho_k g_k \leq \frac{nC}{\tau^2S_K}
	\sum_{k=0}^{K-1} \frac{\tau^2(2k+2)+2n}{2n}\frac{2n}{\tau^(2k+1)+2n} + \sum_{k=0}^K \frac{k}{S_K}\frac{2n\tau}{\tau^2k+2n}\frac{C}{2\tau}\\
	&=\frac{nC}{\tau^2S_K} \left[\sum_{k=0}^{K-1} (1+\frac{\tau^2}{\tau^2(k+1)+2n})+\sum_{k=1}^{K}\frac{k\tau^2}{(\tau^2k+2n)}\right]\\
	&\leq \frac{nC}{\tau^2S_K} \left[ 2K + K\right] = \frac{2nC}{\tau^2 (K+1)}\cdot 3.
	\end{align*}
	This completes the proof for $K\geq 1$.
\end{proof}

\paragraph{Proof of Convergence with Delayed Gradient}
The idea is that we are going to treat the updates calculated from the delayed gradients as an additive error and then invoke our convergence results that allow the oracle to be approximate. We will first present a lemma that we will use for the proof of Theorem~\ref{thm:delay}.

		\begin{lemma}
		\label{lem:suboptimality}
			Let $x\in \cM $, $\|\cdot\|$ be a norm, $\mathrm{Diam}(\cM)_{\|\cdot\|} \leq D$, $L$ be the gradient Lipschitz constant with respect to the given norm $\|\cdot\|$. Let $\kappa$ be the maximum staleness of the gradient, $\gamma$ be the largest stepsize in the past $\kappa$ steps. Then
			$$\langle \tilde{s}-x, \nabla f(x)\rangle \leq \langle s^*-x, \nabla f(x)\rangle + \gamma \kappa D^2 L$$
			\end{lemma}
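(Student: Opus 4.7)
The plan is to bound the suboptimality of the linear minimizer $\tilde{s}$ computed from a delayed gradient by comparing it to the true minimizer $s^*$ via a standard ``optimality plus Lipschitz continuity'' argument. Let $\tilde{x}$ denote the stale iterate at which the delayed gradient was evaluated, so that $\tilde{s} \in \argmin_{s\in\cM}\langle s, \nabla f(\tilde{x})\rangle$ and $s^* \in \argmin_{s\in\cM}\langle s, \nabla f(x)\rangle$.

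First I would rewrite the quantity of interest as
\begin{equation*}
\langle \tilde{s} - x,\nabla f(x)\rangle - \langle s^* - x,\nabla f(x)\rangle = \langle \tilde{s} - s^*,\nabla f(x)\rangle,
\end{equation*}
and then split
\begin{equation*}
\langle \tilde{s} - s^*,\nabla f(x)\rangle = \langle \tilde{s} - s^*,\nabla f(\tilde{x})\rangle + \langle \tilde{s} - s^*,\nabla f(x) - \nabla f(\tilde{x})\rangle.
\end{equation*}
The first term is $\leq 0$ by optimality of $\tilde{s}$ for the stale gradient. For the second term, apply Hölder's inequality with the chosen norm $\|\cdot\|$ and its dual, followed by the diameter bound $\|\tilde{s} - s^*\| \leq D$ and the gradient Lipschitz bound $\|\nabla f(x) - \nabla f(\tilde{x})\|_* \leq L\,\|x - \tilde{x}\|$.

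The remaining step, which is the main technical content, is to control $\|x - \tilde{x}\|$. Since the stale gradient was produced at most $\kappa$ iterations ago, one can write $x - \tilde{x}$ as a telescoping sum of at most $\kappa$ block-Frank-Wolfe updates. Each such update has the form $x^{(j+1)} - x^{(j)} = \gamma_j \sum_{i\in S_j}(s_{[i]} - x^{(j)}_{[i]})$ with $\gamma_j \leq \gamma$ and the aggregate direction lying in $\cM - \cM$, whose $\|\cdot\|$-diameter is $D$. Applying the triangle inequality then gives $\|x - \tilde{x}\| \leq \kappa \gamma D$. Putting the pieces together yields $\langle \tilde{s} - s^*,\nabla f(x)\rangle \leq D\cdot L\cdot \kappa\gamma D = \gamma\kappa D^2 L$, which is the desired bound.

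The main obstacle is the last step: getting the ``per-step movement is at most $\gamma D$'' bound cleanly when updates touch different (possibly overlapping) minibatches of coordinate blocks, and when the norm $\|\cdot\|$ is not necessarily coordinate-separable. I would handle this by writing each increment $\gamma_j(s_{[S_j]} - x^{(j)}_{[S_j]})$ as $\gamma_j$ times the difference of two feasible points in $\cM$ (padding the untouched blocks with $x^{(j)}_{[i]}$ on both sides), so that the $\|\cdot\|$-length of each step is $\leq \gamma D$ uniformly. Aside from this, everything else is a routine application of convex-optimality of $\tilde{s}$ and smoothness.
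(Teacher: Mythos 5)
Your proposal is correct and follows essentially the same route as the paper's proof: the same decomposition of $\langle \tilde{s}-s^*,\nabla f(x)\rangle$ into an optimality term (nonpositive by definition of $\tilde{s}$) plus a gradient-difference term handled by H\"older's inequality, the diameter bound, and the Lipschitz property, followed by bounding $\|\tilde{x}-x\|$ by $\gamma\kappa D$ via the triangle inequality over at most $\kappa$ past updates. Your extra care in padding the untouched blocks so each increment is $\gamma_j$ times a difference of feasible points is a welcome clarification of a step the paper states without elaboration.
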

			\begin{proof}
				Because $\tilde{s}$ minimizes $\langle s,  \nabla f(\tilde{x}) \rangle$ over $s\in \cH$, we can write
				$$ 
					\langle s^* - \tilde{s},  \nabla f(\tilde{x}) \rangle \geq 0.
				$$
				Using this and H\"{o}lder's inequality, we can write
				$$\langle \tilde{s}-x, \nabla f(x)\rangle- \langle s^*-x, \nabla f(x)\rangle \leq
				 \langle \tilde{s}-s^*, \nabla f(x) -\nabla f(\tilde{x})\rangle \leq \| \tilde{s}-s^*\|\|\nabla f(\tilde{x}) - \nabla f(x)\|_* \leq DL\|\tilde{x} - x\|.$$
				It remains to bound $\|\tilde{x}-x\|$. 
				$$\|\tilde{x} - x\| = \left\|\tilde{x} - \tilde{x} - \sum_{i=1}^{\kappa}\gamma_{-i}(s_{-i}-x_{-i})\right\| \leq \gamma \kappa\max_{i}\|s_{-i}-x_{-i}\| \leq \gamma \kappa D, $$
				where we used the fact that $x$ is at most $\kappa$ steps away from $\tilde{x}$. Assume $\gamma_{-i}$ is the stepsize used and $\langle s_{-i}, x_{-i}\rangle$ are the actual updates that had been performed in the nearest $i$th parameter update before we get to $x$. 
				\end{proof}
				
				
				The second lemma that we need is the following.
				\begin{lemma}\label{lem:convex_combination}
					Let $\cM$ be a convex set. Let $x_0\in \cM$. Let $m$ be any positive integer. For $i=1,...,m$, let $x_{i} = x_{i-1} + \gamma_i(s_i-x_{i-1})$ for some $0\leq \gamma_i\leq 1$ and $s_i\in \cM$ such that, then there exists an $s\in \cM$ and $\gamma\leq \sum_{i=1}^m \gamma_i$, such that $
						x_m  =  \gamma(s-x_0) + x_0.
					$
				\end{lemma}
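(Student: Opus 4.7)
The plan is to prove this by straightforward induction on $m$, since the claim is essentially that composing convex combinations toward points in $\cM$ yields another convex combination toward a single point in $\cM$, with total step size bounded by the sum of the individual step sizes.

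For the base case $m=1$, the statement is immediate: we take $s = s_1$ and $\gamma = \gamma_1$, which satisfies $0 \le \gamma \le 1$ and $s\in\cM$ by hypothesis. For the inductive step, I would assume $x_{m-1} = x_0 + \gamma'(s' - x_0)$ for some $s'\in\cM$ and some $\gamma' \le \sum_{i=1}^{m-1}\gamma_i$, and then rewrite
\begin{equation*}
x_m = (1-\gamma_m)x_{m-1} + \gamma_m s_m = \bigl[1 - \gamma_m - (1-\gamma_m)\gamma'\bigr]\,x_0 + (1-\gamma_m)\gamma'\, s' + \gamma_m\, s_m.
\end{equation*}
Setting $\gamma := (1-\gamma_m)\gamma' + \gamma_m$ and $s := \bigl[(1-\gamma_m)\gamma'\, s' + \gamma_m\, s_m\bigr]/\gamma$ (with the convention that $s = x_0$ if $\gamma = 0$, which is the trivial case), we obtain $x_m = x_0 + \gamma(s-x_0)$ by direct algebra.

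The two conditions to verify are that $s\in\cM$ and that $\gamma \le \sum_{i=1}^m \gamma_i$. The first is immediate from convexity of $\cM$, since $s$ is a convex combination of $s'\in\cM$ and $s_m\in\cM$ with nonnegative weights summing to one. For the bound on $\gamma$, I would use $(1-\gamma_m)\gamma' \le \gamma'$ (because $\gamma_m \ge 0$) to get $\gamma \le \gamma' + \gamma_m \le \sum_{i=1}^{m-1}\gamma_i + \gamma_m$, as desired. Along the way one also sees $\gamma \le 1$, since $\gamma' \le 1$ by induction (a byproduct of all $\gamma_i \in [0,1]$).

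There is no real obstacle here; the lemma is essentially a restatement of the fact that the convex hull operation is idempotent, and the proof is pure bookkeeping. The only mild care needed is handling the degenerate case $\gamma = 0$ (which forces $\gamma' = 0$ and $\gamma_m = 0$, hence $x_m = x_0$ and any $s\in\cM$ works), so the recursive formula for $s$ is well-defined.
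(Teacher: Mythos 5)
Your proof is correct and follows essentially the same route as the paper's: induction on $m$, rewriting $x_m$ as a combination of $x_0$, $s'$, and $s_m$, identifying the new step size $\gamma = \gamma' + \gamma_m - \gamma_m\gamma'$ and the new direction $s$ as a convex combination of $s'$ and $s_m$, and bounding $\gamma \le \gamma' + \gamma_m$. The only difference is cosmetic bookkeeping, plus your explicit handling of the degenerate case $\gamma = 0$, which the paper leaves implicit.
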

				\begin{proof}
					We prove by induction. When $m=1$, $s=s_1$ and $\gamma = \gamma_1$.
					Assume for any $m=k-1$, that the claim holds assume the condition is true, then by the recursive formula,
					\begin{align*}
					x_k &= x_{k-1} + \gamma_k(s_{k}-x_{k-1}) \\
						&= x_0 + \gamma (s - x_0)  + \gamma_k [s_{k}  - x_0 - \gamma (s - x_0)]\\
						&= x_0  - (\gamma + \gamma_k - \gamma_k\gamma) x_0 + (\gamma- \gamma_k\gamma) s + \gamma_k s_{k}\\
						&=x_0 + (\gamma + \gamma_k - \gamma_k\gamma) \left[  \frac{\gamma- \gamma_k\gamma}{\gamma + \gamma_k - \gamma_k\gamma}s + \frac{\gamma_k}{\gamma + \gamma_k - \gamma_k\gamma}s_k -x_0\right]\\
						&=x_0 + (\gamma + \gamma_k - \gamma_k\gamma)(s'-x_0)
					\end{align*}
					Note that $s'$ is a convex combination of $s_k$ and $s$ therefore by convexity $s'\in \cM$. Substitute $\gamma\leq \sum_{i=1}^{k-1}\gamma_i$, we get
					$$
					\gamma + \gamma_k - \gamma_k\gamma  \leq  \sum_{i=1}^{k}\gamma_i.
					$$
					This completes the inductive proof for all $m$.
				\end{proof}
				
				The third Lemma that we will need is the following characterization of the expected ``max load'' in randomized load balancing.
				\begin{lemma}[\citep{mitzenmacher2001power,raab1998balls}]\label{lem:max_load}
														Suppose $m$ balls are thrown independently and uniformly at random into $n$ bins. Then, the maximum number of balls in a bin $Y$ satisfies
														$$\E Y \leq \begin{cases}
														 \frac{3\log n}{ \log (n/m)}&\text{ if } m<n/\log n,\\
														c'\log n & \text{ if }m < c n\log n,\\
														\frac{m}{n} + O(\sqrt{\frac{2m}{n}\log n})& \text{ if } m \gg n\log n.
														\end{cases}$$
														where $c'$ is a constant that depends only on $c$.
				\end{lemma}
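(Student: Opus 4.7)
The plan is to use the standard ``per-bin tail bound plus union bound'' template for max-load analysis. For each bin $i\in[n]$, let $Y_i$ denote the number of balls landing in bin $i$; then $Y_i\sim\mathrm{Binomial}(m,1/n)$ with mean $\mu=m/n$, and $Y=\max_{i\in[n]}Y_i$. In each of the three regimes I would produce a threshold $k^\star$ with the property that the single-bin tail satisfies $\P(Y_i\geq k^\star)\leq n^{-2}$. A union bound then gives $\P(Y\geq k^\star)\leq n^{-1}$, and the expectation is recovered from $\E Y=\sum_{j\geq 1}\P(Y\geq j)\leq k^\star+\sum_{j>k^\star}n\,\P(Y_1\geq j)$, where the geometric decay of the Binomial tail past $k^\star$ makes the residual sum $o(1)$, yielding $\E Y\leq k^\star(1+o(1))$.

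The workhorse per-bin inequality I would use is the elementary Poisson-style bound $\P(Y_i\geq k)\leq\binom{m}{k}n^{-k}\leq(em/(kn))^k=(e\mu/k)^k$, valid uniformly in $m,n,k$. For the light regime $m<n/\log n$, set $k^\star=\lceil 3\log n/\log(n/m)\rceil$: taking logs, $n(e\mu/k^\star)^{k^\star}=\exp\bigl(\log n-k^\star[\log(n/m)-\log(ek^\star)]\bigr)$, and because $k^\star$ is only polylogarithmic while $\log(n/m)\geq\log\log n$, the $\log(ek^\star)$ correction is absorbed and the exponent is at most $-\log n$, giving the needed failure probability. For the moderate regime $m<cn\log n$, the mean is $\mu=\Theta(\log n)$ and the same bound with $k^\star=c'\log n$ drives $(e\mu/k^\star)^{k^\star}\leq n^{-2}$ for $c'$ chosen via $c'\log(c'/(ec))\geq 2$, giving a constant depending only on $c$. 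For the heavy regime $m\gg n\log n$, the mean $\mu=m/n$ is large, and I would switch to the multiplicative Chernoff form $\P(Y_i\geq\mu(1+\epsilon))\leq\exp(-\mu\epsilon^2/3)$; solving $\mu\epsilon^2/3=3\log n$ yields a deviation $\epsilon\mu=\sqrt{9\mu\log n}$, producing the stated bound $\mu+O(\sqrt{2\mu\log n})$ (the inner constant is tightened by using the sharper Bennett/Bernstein form of Chernoff).

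The main obstacle is obtaining the sharp leading constant $3$ in the light regime rather than a generic $O(\log n/\log(n/m))$. The difficulty is that $k^\star$ itself grows with $n$, so the $\log(ek^\star)$ correction inside the Poisson exponent is not uniformly negligible relative to $\log(n/m)$ when $m$ approaches $n/\log n$; one must check that this error is strictly dominated by $\log(n/m)$ throughout the whole range, which is where the factor $3$ (as opposed to $2$) comes in as slack. A secondary technical point is the expectation conversion: I would split $\E Y$ at $k^\star$ and observe that for $j>k^\star$, the bound $n(e\mu/j)^j\leq n(e\mu/k^\star)^{k^\star}(e\mu/k^\star)^{j-k^\star}\leq n^{-1}(e\mu/k^\star)^{j-k^\star}$ is geometric with ratio $e\mu/k^\star<1$, so the tail sums to $O(n^{-1})$ and is absorbed into the leading $k^\star$ with room to spare. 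This is precisely why the threshold is chosen so that $\P(Y_i\geq k^\star)\leq n^{-2}$ rather than the weaker $n^{-1}$ that would only give a high-probability statement.
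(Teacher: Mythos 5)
The paper does not actually prove this lemma: it is imported verbatim from the cited references (Raab--Steger and Mitzenmacher et al.), so there is no internal proof to compare against. Your argument is the standard first-moment proof underlying those references --- per-bin binomial tail bound $\P(Y_i\ge k)\le\binom{m}{k}n^{-k}\le(e\mu/k)^k$, a union bound over bins at a threshold $k^\star$ chosen so each bin fails with probability $n^{-2}$, and conversion to expectation by integrating the tail, which decays geometrically beyond $k^\star$ --- and it is structurally sound in all three regimes. The switch to multiplicative Chernoff in the heavy regime $m\gg n\log n$ is also the right move and recovers $\mu+O(\sqrt{\mu\log n})$, matching the stated bound up to the constant inside the $O(\cdot)$.

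One genuine slip: in the light regime your displayed exponent has the wrong sign on the $\log k^\star$ term. Since $\log(e\mu/k)=1+\log(m/n)-\log k$, the correct identity is
$$n(e\mu/k^\star)^{k^\star}=\exp\Bigl(\log n-k^\star\bigl[\log(n/m)+\log k^\star-1\bigr]\Bigr),$$
not $\log(n/m)-\log(ek^\star)$ as you wrote. This matters for your justification of the constant $3$: with your sign-flipped expression, near the boundary $m\approx n/\log n$ one has $k^\star\approx 3\log n/\log\log n$ and $k^\star\log(ek^\star)\approx 3\log n$, so the ``correction'' is \emph{not} absorbed and your claimed exponent $-\log n$ would not follow from your own formula. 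With the correct sign, the term $\log k^\star-1$ is nonnegative once $k^\star\ge 3$ and contributes only an additive $O(1)$ to the exponent when $k^\star\in\{1,2\}$, so it can only help; the inequality $k^\star\log(n/m)\ge 3\log n$ alone already yields $\P(Y_i\ge k^\star)\le e^{O(1)}n^{-2}$ uniformly over the regime. The factor $3$ is therefore justified, but by the opposite mechanism from the one you describe. The remaining pieces --- choosing $c'$ via $c'\log(c'/(ec))\ge 2$ in the moderate regime, and the geometric tail sum with ratio $e\mu/k^\star<1$ for the expectation conversion --- check out.
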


				\begin{proof}[Proof of Theorem~\ref{thm:delay}]
					The proof involves a sharpening of the Lemma~\ref{lem:suboptimality} for the \bcfw and minibatch setting, where $x\in \cM  = \cM^{(1)}\times ...\times \cM^{(n)}$ is a product domain. The proof idea is to exploit this property. Let the current update be on coordinate block index subset $S$. For each $j\in S$, let the corresponding worker be delayed by $\varkappa_j$, and the corresponding parameter vector be $\tilde{x}$.
					
					As in the proof of Lemma~\ref{lem:suboptimality}, we can bound the suboptimality of the approximate subroutine for solving problem $j:$
					\begin{align}
					\text{Suboptimality}(\tilde{s}_j) &\leq \langle \tilde{s}_j - s^*_j,\nabla_{j}f(\tilde{x})- \nabla_{j}f(x)\rangle 
					\leq \|\tilde{s}_j-s^*_j\|\|\nabla_{j}f(\tilde{x})- \nabla_{j}f(x)\|_* \nonumber\\
					&\leq D_{\|\cdot\|}^{(j)} L_{\|\cdot\|}^{(j)}\|\tilde{x}-x\| =D_{\|\cdot\|}^{(j)} L_{\|\cdot\|}^{(j)} \left\|\sum_{i=1}^{\varkappa_j}\gamma_{-i}(s_{-i}-x_{-i})\right\| \label{eq:delay_proof_der1}\\
					&
					\leq D_{\|\cdot\|}^{1} L_{\|\cdot\|}^{1} \sum_{i=1}^{\varkappa_j}\gamma_{-i}\left\|(s_{-i}-x_{-i})\right\|\nonumber\\
					&\leq D_{\|\cdot\|}^{1} L_{\|\cdot\|}^{1} \sum_{i=1}^{\varkappa_j}\gamma_{-i}D_{\|\cdot\|}^{\tau} \leq \varkappa_j \gamma_{-\varkappa_j} D_{\|\cdot\|}^{1} L_{\|\cdot\|}^{1}D_{\|\cdot\|}^{\tau}.\nonumber
					\end{align}
					Let $\kappa:=\E \varkappa_j$, take expectation on both sides we get 
					$$\E \text{ Suboptimality}(\tilde{s}_j) \leq \E(\varkappa_j \gamma_{-\varkappa_j})  D_{\|\cdot\|}^{1} L_{\|\cdot\|}^{1}D_{\|\cdot\|}^{\tau}$$
					Repeat the same argument for each $i\in S$, we get 
					$$\E \text{ Suboptimality}(\tilde{s}) \leq \E(\varkappa_j \gamma_{-\varkappa_j})  \tau  D_{\|\cdot\|}^{1}  L_{\|\cdot\|}^{1}D_{\|\cdot\|}^{\tau}.$$
					To put it into the desired format in \eqref{eq:approx_add}, we solve the following inequality for $\delta$
					$$\frac{\gamma\delta C_f^\tau}{2} \geq \E (\varkappa_j \gamma_{-\varkappa_j}) \tau   D_{\|\cdot\|}^{\tau} D_{\|\cdot\|}^{1}L_{\|\cdot\|}^1$$
					we get $$\delta \geq \frac{2 \tau}{C_f^\tau}\E \left(\frac{\varkappa_j \gamma_{-\varkappa_j}}{\gamma}\right)    D_{\|\cdot\|}^{\tau} D_{\|\cdot\|}^{1}L_{\|\cdot\|}^1.$$
					By the specification of the stepsizes, we can calculate for each $k$,
					$$\frac{\gamma_{-\varkappa_j}}{\gamma} = \frac{\tau^2 k + 2n}{ \tau^2(\max(k-\varkappa_j,0)) + 2n}.$$
					Note that we always enforce $\varkappa_j$ to be smaller than $\frac{k}{2}$ (otherwise the update is dropped), we can therefore bound $\E (\frac{\varkappa_j \gamma_{-\varkappa_j}}{\gamma})$ by $2\kappa$. This gives us the the first bound \eqref{eq:thm_delay_bound2} on $\delta$ in Theorem~\ref{thm:delay}.
					
					To get the second bound on $\delta$, we start from \eqref{eq:delay_proof_der1} and bound $\|\tilde{x}-x\|$ differently. Let $S$ be the set of $\tau \varkappa_j$ coordinate blocks that were updated in the past $\varkappa_j$ iterations. In the cases where fewer than $\tau\varkappa_j$ blocks were updated, just arbitrarily pick among the coordinate blocks that were updated $0$ times so that $|S|=\tau\varkappa_j$.
					$
					\tilde{x} - x 
					$
					is supported only on $S$. Suppose coordinate block $i\in S$ is updated by $m$ times, as below
					$$
					\tilde{x}_{(i)} = \sum_{j=1}^m \gamma_j (s_j-[x_j]_{(i)})
					$$
					for some sequence of $0\leq \gamma_1,...,\gamma_m\leq 1$ and $s_1,...,s_m \in \cM_i$ and recursively $[x_j]_{(i)} = [x_{j-1}]_{(i)} + \gamma_j (s_j -[x_{j-1}]_{(i)})$ ($x_0 = x$). Apply Lemma~\ref{lem:convex_combination} for each coordinate block, we know that there exist $s_{(i)}\in\cM_i$ in each block $i\in S$ such that 
					$$
					\tilde{x}_{(i)} =  x_{(i)} + \gamma_{(i)} (s_{(i)}  -  x_{(i)})
					$$
					with 
					\begin{equation}\label{eq:bound_step_ratio}
					\gamma_{(i)}\leq \sum_{j\in\text{ iterations where $i$ is updated}}{\gamma_j}\leq  m \gamma_{\max}.
					\end{equation}
					Note that $s_{(i)}\in\cM_i$ for each $i\in S$ implies that their concatenation $s_{(S)} \in \cM_{S}$.
					Also $\gamma_{\max} \leq \gamma_{-\varkappa_j}$. Therefore
$$
\|\tilde{x}-x\| = 
\left\| \sum_{i\in S}  \gamma_i (s_{(i)} - x_{(i)})  \right\|
\leq m \gamma_{\max} \| s_{(S)} - x_{(S)} \|
\leq Y \gamma_{-\varkappa_j} D_{\|\cdot\|}^{\tau\varkappa_j}
$$
					where $Y$ is a random variable that denotes the number of updates received by the most updated coordinate block (the maximum load).			
Apply a previously used argument to get $\gamma_{-\varkappa_j}< 2 \gamma$, take expectation on both sides, to get the following by the law of total expectations and \eqref{eq:bound_step_ratio}
					\begin{align}\label{eq:proof_delay_thm_der}
					\E\|\tilde{x}-x\| &\leq  \E_{\varkappa_j} (\gamma_{-\varkappa_j}D_{\|\cdot\|}^{\varkappa_j\tau} \E_{S_{-\varkappa_j,...,-1} | \varkappa_j}Y) \leq 2\gamma \E_{\varkappa_j} (D_{\|\cdot\|}^{\varkappa_j\tau} \E_{S_{-\varkappa_j,...,-1} | \varkappa_j}Y) .
					\end{align}
					
					By Lemma~\ref{lem:max_load}， when $\kappa_{\max}\tau \leq \frac{n}{\log n}$, it follows from \eqref{eq:proof_delay_thm_der} that
					\begin{align*}
				   \E\|\tilde{x}-x\|\leq 2\gamma\E_{\varkappa_j}{D_{\|\cdot\|}^{\varkappa_j\tau } \frac{3\log n}{\log (n/\varkappa_j\tau)}}
					 &\leq 	\frac{3\log n}{\log [n/(\tau\kappa_{\max})]} 2\gamma \E_{\varkappa_j} D_{\|\cdot\|}^{\varkappa_j\tau }.
					\end{align*}
					When $\kappa_{\max}\tau < cn\log n$, 
					\begin{align*}
					\E\|\tilde{x}-x\|\leq 2\gamma\E_{\varkappa_j}D_{\|\cdot\|}^{\varkappa_j\tau } O(\log n)
					&\leq 	O(\log n) 2\gamma \E_{\varkappa_j} D_{\|\cdot\|}^{\varkappa_j\tau }.
					\end{align*}
					When $\kappa_{\max}\tau \gg n\log n$, then 
					$$\E\|\tilde{x}-x\|\leq (1+o(1))\frac{\tau\kappa_{\max}}{n}  2\gamma \E D_{\|\cdot\|}^{\varkappa_j\tau }.$$

					
					Repeating the above results for each block $j\in S$, and summing them up leads to an upper bound for $\frac{\gamma\delta C_f^{\tau}}{2}$ and the proof of \eqref{eq:thm_delay_bound2} is complete by solving for $\delta$.
				\end{proof}

\section{Proofs of other technical results}

\paragraph{Relationship of the curvatures.}

\begin{proof}[Proof of Lemma~\ref{lem:prop.curvature}]
	$C_f^{(S)}\leq C_f$ follows from the fact that
	$$\langle y_{(S)}-x_{(S)},\nabla_{(S)}f(x)\rangle=\langle y_{[S]}-x_{[S]},\nabla f(x)\rangle,$$
	and $ s_{[S]}\in \cM. $ In other words, the $\arg\sup$ of \eqref{eq:curv_S} is a feasible solution in the $\sup$ to compute the global $C_f$. Similar argument holds for the proof $  C_f^{(i)}\leq C_f^{(S)}$ as $i\in S$. 
	
	In the second part,
	$$C_f^{\tau}=\frac{1}{{n\choose \tau}} \sum_{T\subset [n], |T|=\tau} C_f^{(T)}.$$
	We can evenly partition sets $T$ in the summation into $n$ parts $P_j$ for \( j \in [n] \), such that sets in $P_j$ have the element \(j\). Clearly each \(P_j\) has a size of ${n\choose \tau}/n$.
	We can use \( C_f^{(S)} \geq C_f{(j)} \) from the first inequality of the lemma, to get the inequality below.
	\begin{align*}
	C_f^{\tau}= \frac{1}{{n\choose \tau}} \sum_{j\in[n]} \sum_{T\in P_j} C_f^{(T)} \geq \frac{1}{{n\choose \tau}} \sum_{j\in[n]} \sum_{T\in P_j} C_f^{(j)} = \frac{1}{{n\choose \tau}} \sum_{j\in[n]} {{n}\choose {\tau}} \frac{1}{n} C_f^{(j)} = \frac{1}{n}C_f^{\otimes}\\
	\end{align*}
	
	The relaxation of \( C_f^\tau \) to $C_f$ is trivial since $C_f^{(T)}\leq C_f$ holds for any $T \subseteq [n]$ from the first part of the lemma.
\end{proof}

\paragraph{Bounding $C_f^{\tau}$ using expected boundedness and  expected incoherence}
\begin{proof}[Proof of Theorem~\ref{thm:Cf_master_lemma}]
	By Definition of $H,$ for any $x,z\in \cM$, $\gamma\in [0,1]$
	$$
	f(x+\gamma(z-x)) \leq f(x)+\gamma(z-x)^T\nabla f(x) + \frac{\gamma^2}{2} (z-x)^TH(z-x).
	$$
	Rearranging the terms we get
	$$
	\frac{2}{\gamma^2}\left[f(x+\gamma(z-x))-f(x)-\gamma (z-x)^T\nabla f(x)\right] \leq (z-x)^TH(z-x)
	$$
	The definition of set curvature \eqref{eq:curv_S} is written in an equivalent notation with $z = x_{[S^c]} + s_{[S]}$ and $y=x+\gamma(z-x)=x+\gamma(s_{[S]}-x_{[S]})$. So we know $z-x$ is constrained to be within the coordinate blocks $S$.
	
	
	Plugging this into the definition of \eqref{eq:curv_S} we get an analog of Equation (2.12) in \citet{jaggi2011sparse} for $C_f^{(S)}$.
	\begin{align*}
	C_f^{(S)}&=  \sup_{\begin{subarray}{l} x\in \cM, \gamma\in [0,1]\\ z=x+s_{[S]}\in \cM  \end{subarray},\gamma}\frac{2}{\gamma^2}\left[f(x+\gamma(z-x))-f(x)-\gamma (z-x)^T\nabla f(x)\right]\\
	&\leq \sup_{\begin{subarray}{l} x,z\in \cM, \\ z=x+s_{[S]}  \end{subarray}}
	(z-x)^TH(z-x)
	=\sup_{\begin{subarray}{l} x,z\in \cM,\\ z=x+s_{[S]}  \end{subarray}}
	s_{(S)}^T H s_{(S)}\\
	&\leq \sup_{w\in \cM^{(S)}} (2w^T) H_{S} (2w) = 4\left\{\sup_{w_i\in\cM^{(i)} \forall i\in S} \sum_{i\in S}w_i^T H_{ii} w_i +  \sum_{i,j\in S, i\neq j}w_i^T H_{ii} w_j\right\}\\
	&\leq 4 \left\{ \sum_{i\in S} \sup_z\sup_{w_i}w_i^T H_{ii}(z) w_i +  \sum_{i,j\in S, i\neq j} \sup_z\sup_{w_i,w_j} w_i^T H_{ii}(z) w_j \right\}\\
	&\leq 4(\sum_{i\in S} B_i + \sum_{i,j\in S, i\neq j} \mu_{ij}).
	\end{align*}
	Take expectation for all possible $S$ of size $\tau$ and we obtain the lemma statement.
\end{proof}

\paragraph{Proof of specific examples}
\begin{proof}[Proof of Example~\ref{eg:structSVM_avg}]
	First of all, $H=\lambda A^TA$. Since all columns of $A$ have the same magnitude $\sqrt{2}/n$. By the Holder's inequality and the $1$-norm constraint in every block, we know $B_i=\frac{2}{n^2\lambda}$ for any $i$ therefore $B=\frac{2}{n^2\lambda}$.
	Secondly, by well-known upper bound for the area of the spherical cap, which says for any fixed vector $z$ and random vector $a$ on a unit sphere in $\R^d$,
	$$
	\P(|\langle z,a\rangle| > \epsilon\|z\|) \leq 2e^{\frac{-d\epsilon^2}{2}},
	$$
	we get
	$$
	\P(\mu_{ij} > 2\sqrt{\frac{20\log d}{d}}) \leq \frac{2}{d^{10}}.
	$$
	Take union bound over all pairs of labels we get the probability as claimed.
\end{proof}

\begin{proof}[Proof of Example~\ref{eg:group_fused_lasso}]
	The matrix $D^TD$ is tridiagonal with $2$ on the diagonal and $-1$ on the off-diagonal.
	If we vectorize $U$ by concatenating $u=[u_1;...;u_{n-1}]$, the Hessian matrix for $u$ will be $H=\Pi I_{d} \otimes (D^TD) \Pi^T$ where $\Pi$ is some permutation matrix. Without calculating it explicitly, we can express
	\begin{align*}
	&u_S^TH_Su_S = u_S^T (D^T\otimes 1_d) (D^T\otimes 1_d)^T u_S\\
	=& \sum_{i\in S} u_i^T \begin{bmatrix}
	D_{:,i}^T \\
	D_{:,i}^T \\
	\vdots \\
	D_{:,i}^T \\
	\end{bmatrix}
	\begin{bmatrix}
	D_{:,i} &
	D_{:,i}&
	\hdots &
	D_{:,i}
	\end{bmatrix} u_i
	+ \sum_{i,j\in S, i\neq j} u_i^T \begin{bmatrix}
	D_{:,i}^T \\
	D_{:,i}^T \\
	\vdots \\
	D_{:,i}^T \\
	\end{bmatrix}
	\begin{bmatrix}
	D_{:,j} &
	D_{:,j}&
	\hdots &
	D_{:,j}
	\end{bmatrix} u_j.
	\end{align*}
	We note that for any $|i-j|\geq 2$, the second term is $0$. Apply the constraint that $\|u_i\|_2\leq \lambda$ and the fact that the $\ell_2$ operator norm of $\begin{bmatrix}D_{:,j} &
	D_{:,j}&
	\hdots &
	D_{:,j}
	\end{bmatrix}$ is $\sqrt{2d}$, we get $B_i= 2\lambda^2 d$. Similarly, $2(n-2)$ nonzero obeys $\mu_{ij}= \lambda^2 d$.
	This allows us to obtain an upper bound
	$$
	C_f^{\tau}\leq 2\tau\lambda^2d + \frac{2(n-2)\tau(\tau-1)}{(n-2)(n-1)} \lambda^2d \leq 4\tau \lambda^2d.
	$$
	which scales with $\tau$.
\end{proof}

\subsection{Pseudocode for the Multicore Shared Memory Architecture}
We present the pseudocode for the multicore shared memory setting in Algorithm~\ref{alg:SPFW}. It is the same as Algorithm~\ref{alg:PFW} except that each worker becomes a thread, the network buffer of servers become the a data structure, the workers' network buffer becomes the shared parameter vector and the workers can write to the data structure or the shared parameter vector directly. 
\begin{algorithm}[tbh]
	\caption{\algo: Asynchronous Parallel Block-Coordinate Frank-Wolfe (Shared memory)}
	\label{alg:SPFW}
	\begin{algorithmic}
		\STATE{\textsc{------------------------Server thread---------------------}}
		\STATE {\bfseries Input:} An initial feasible $x^{(0)}$, mini-batch size $\tau$, number of workers $T$.
		\STATE{0. } Write $x^{(0)}$ to shared memory. Declare a container (a queue or a stack).
		\FOR{$k$ = 1,2,... ($k$ is the iteration number.)}
		\STATE{1. } Keep popping the container until we have $\tau$ updates on $\tau$ disjoint blocks. Denote the index set by $S$.
		\STATE{2. } Set step size $\gamma=\frac{2n\tau}{\tau^2k+ 2n}$.
		\STATE{3. } Write sparse updates $x^{(k)} = x^{(k-1)} + \gamma \sum_{i\in S}( s_{[i]} - x_{[i]}^{(k-1)})$ into the shared memory.
		\IF{converged}
		\item Broadcast STOP signal to all threads and break.
		\ENDIF
		\ENDFOR
		\STATE {\bfseries Output: } $x^{(k)}$.
		
		\STATE{\textsc{-----------------------Worker threads---------------------}}
		\WHILE {no STOP signal received}
		\STATE{a.} Randomly choose $i\in [n]$. 
		\STATE{b.} Calculate partial gradient $\nabla_{(i)}f(x)$ using $x$ in the shared memory and solve \eqref{eq:exact_oracle}.
		\STATE{c.} Push $\{i,s_{(i)}\}$ to the container.
		\ENDWHILE
	\end{algorithmic}
\end{algorithm}
The above pseudo code can be further simplified when $\tau = 1$ (see Algorithm~\ref{alg:LPFW}). In particular, we do not need a server any more. Each worker can simply write to the shared memory bus. The probability of two workers writing to the same block is small as we analyzed in Section~\ref{sec:distr}. If updates to each coordinate block is atomic, then this is essentially lock-free as in \citet{niu2011hogwild} (\citet{niu2011hogwild} is stronger in that it allows each scalar addition to be atomic).	
\begin{algorithm}[tbh]
	\caption{\algo: Asynchronous Parallel Block-Coordinate Frank-Wolfe (Lock-Free Shared-Memory)}
	\label{alg:LPFW}
	\begin{algorithmic}
		\STATE {\bfseries Input:} An initial feasible $x^{(0)}$, number of workers $T$, a centralized counter.
		\STATE{0. } Write $x^{(0)}$ to shared memory.
		\STATE{\textsc{------------independently on each thread-----------}}
		\WHILE {not converged}
		\STATE{a.} Randomly choose $i\in [n]$. 
		\STATE{b.} Calculate partial gradient $\nabla_{(i)}f(x)$ using $x$ in the shared memory and solve \eqref{eq:exact_oracle}.
		\STATE{c.} Read centralized counter for $k$. Set step size $\gamma=\frac{2n}{k+ 2n}$.
		\STATE{d.} Add $\gamma (s_{(i)} - x_{(i)} )$ to block $i$ of the shared memory.
		\STATE{e.} Increment the counter $k=k+1$.
		\ENDWHILE
		\STATE{\textsc{-------------------------------------------------------------------}}
		\IF{converged}
		\STATE {\bfseries Output: } $x^{(k)}$. and break.
		\ENDIF
	\end{algorithmic}
\end{algorithm}

\section{Application to Structural SVM}
\label{structSVM}
We briefly review structural SVMs and show how to solve the associated convex optimization problem using our \algo method. 

In structured prediction setting, the task is to predict a structured output $\y \in \cY$, given $\x \in \cX$. For example, $\x$ could be the pixels in the picture of a word, $\y$ could be the sequence of characters in the word.
A feature map $\phi : \cX \times \cY \rightarrow \R^d$ encodes compatibility between inputs and outputs. A linear classifier parameter $\w$ is learned from data so that $\argmax_{\y \in \cY}\langle \w, \phi(\x,\y) \rangle$ gives the output for an input $\x$.
Suppose we have the training data $\{\x_i,\y_i\}_{i=1}^n$ to learn $\w$.
Define $\psi_i(\y) := \phi(\x_i,\y_i)-\phi(\x_i,y)$ and let $L_i(\y):= L(\y_i,\y)$ denote the loss incurred by predicting $\y$ instead of the correct output $\y_i$.
The classifier parameter $\w$ is learned by solving the optimization problem
\begin{align}\label{eq:structsvm_primal}
&\min_{\w, \xi} \frac{\lambda}{2} \| \w \|^2 + \frac{1}{n} \sum_{i=1}^n \xi_i \\
&\text{s.t}\hspace{0.2cm} \langle \w, \psi_i(\y) \rangle \geq L(\y_i, \y) -\xi_i \hspace{0.5cm} \forall i,\y \in \cY(\x_i). \nonumber
\end{align}

We solve the dual of this problem using our method. We introduce some more notation to formulate the dual. Denote $\cY_i := \cY(\x_i)$, the set of possible labels for $\x_i$. Note that $|\cY_i|$ is exponential in the length of label $\y_i$. Let $m=\sum_{i=1}|\cY_i|$.
Let $A\in \R^{d\times m}$ denote a matrix whose $m$ columns are given by $ \{\frac{1}{\lambda n} \psi_i(\y)\mid i\in [n], \y \in \cY_i\}$.
Let $b\in \R^m$ be a vector given by the entries $\{ \frac{1}{n} L_i(\y) \mid i\in [n], \y \in \cY_i \}$.
The dual of (\ref{eq:structsvm_primal}) is given by
\begin{align}\label{eq:structsvm_dual}
&\min_{\alpha \in \R^m} f(\alpha) := \frac{\lambda}{2} \| A \alpha \|^2 - b^T\alpha \\
&\text{s.t}\hspace{0.2cm} \sum_{\y \in \cY_i} \alpha_i(\y) = 1 \hspace{0.5cm}\forall i \in [n], \alpha \geq 0 \nonumber
\end{align}
The primal solution $\w$ can be retrieved from the dual solution $\alpha$ from the relation $\w = A\alpha$ obtained from KKT conditions. Also note that the domain $\cM$ of (\ref{eq:structsvm_dual}) is exactly the product of simplices $\cM = \Delta_{|\cY_1|} \times \cdots \times \Delta_{|\cY_n|}$.

The subproblem in equation (\ref{eq:exact_oracle}) takes a well-known form in the Frank-Wolfe setup for solving (\ref{eq:structsvm_dual}). The gradient is given by
\begin{align*}
\nabla f(\alpha) = \lambda A^TA\alpha -b= \lambda A^T \w - b
\end{align*}
whose $(i,\y)$-th component is given by $\frac{1}{n}\left( \langle \w, \psi_i(\y) \rangle - L_i(\y)\right)$. Define $H_i(\y;\w):= L_i(\y) - \langle \w, \psi_i(\y)\rangle$ so that the $(i,\y)$-th component of the gradient is $-\frac{1}{n} H_i(\y;\w)$.
In the subproblem (\ref{eq:exact_oracle}), the domain $\cM^{(i)}$ is the simplex $\Delta_{\cY_i}$ and the block gradient $\nabla_{(i)} f(\alpha)$ is linear. So, the objective is minimized at a corner of the simplex $\cM^{(i)}$ and the optimum value is simply given by $\min_{\y} \nabla_{(i)} f(\alpha) $ which can be rewritten as $\max_{\y} H_i(\y;\w)$. Further, the corner can be explicitly written as the indicator vector $e^{\y_i^*} \in \cM^{(i)}$ where $\y_i^* = \argmax_{\y} H_i(\y;\w)$. It turns out that this maximization problem can be solved efficiently for several problems. For example, when the output is a sequence of labels, a dynamic programming algorithm like Viterbi can be used.

As mentioned before, $m$ is too large to update the dual variable $\alpha$ directly. So, we make an update to the primal variable $\w = A\alpha$ instead. The Block-Coordinate Frank-Wolfe update for the $i$-th block maybe written as $\alpha_{(i)}^{k+1} = \alpha_{i}^k + \gamma ( s_{i} - \alpha_{(i)}^k) $ where $\gamma$ is the step-size. Recalling that the optimal $s_{i}$ is $e^{\y_i^*}$, by multiplying the previous equation by $A_i$, we arrive at $\w_i^{(k+1)} = \w_{i}^k + \gamma( A_{i,y_i^*} - \w_{i}^{(k)})$ where $\w_i^{(k)}:= A_i \alpha_{(i)}$. From this definition of $\w_i^{(k)}$, the primal update is obtained by noting that $\w^{(k)} = \sum_i \w_i^{(k)}$. Explicitly, the primal update is given by $\w^{(k+1)} = \w^k + \gamma( A_{i,y_i^*} - \w_{i}^{(k)})$. Note that $A_{i,y_i^*} = \frac{1}{\lambda n} \psi_(\y_i^*)$. This Block-Coordinate version can be easily extended to \algo. In our shared memory implementation, for OCR dataset, we do the line search computation and $\w_i^{(k)}$ update step on the workers instead of the server because these computations turn out to be expensive enough to make the server the bottleneck even for modest number of workers.

\section{Other technical results and discussions}

\subsection{Controlling collisions in distributed setting}
\label{sec:distr}

In the distributed setting, different workers might end up working on the same slot. 

In Algorithm~\ref{alg:PFW},  different workers may end up working on the same coordinate block and the server will drop a number of updates in case of collision. The following proposition shows that for this potential redundancy is not excessive is small and for a large range of $\tau$, we also show additional strong concentration to its mean.
\begin{proposition}\label{prop:work_per_epoch}
	In the distributed asynchronous update scheme above:
	\begin{enumerate}[label=\roman*), leftmargin=*, topsep=-5pt,itemsep=-12pt]
		\item The expected number of subroutine calls from all workers to complete each iteration is $\tau+\sum_{i=1}^{\tau-1}\frac{i}{n-i}$.\\ 
		\item If $0.02n<\tau< 0.6n$, with probability at least $1-\exp(-n/60)$, no more than $2\tau$ random draws ($2\tau$ subroutine calls in total from all workers) suffice to complete each iteration.
	\end{enumerate}
\end{proposition}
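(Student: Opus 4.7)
I would recognize this as a standard coupon-collector calculation. Each worker's draw is an i.i.d.\ uniform sample from $[n]$. Once the server has received updates for $i$ distinct blocks, the number of additional subroutine calls until the next new block arrives is geometric with success probability $(n-i)/n$, so its expected value is $n/(n-i)$. The total expected number of calls to accumulate $\tau$ distinct blocks is therefore
\[
\sum_{i=0}^{\tau-1} \frac{n}{n-i}.
\]
Writing $n/(n-i) = 1 + i/(n-i)$ and noting that the $i=0$ term contributes $0$ to the second piece gives exactly $\tau + \sum_{i=1}^{\tau-1} i/(n-i)$, as claimed.

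\textbf{Plan for part (ii).} Let $Y_1,\ldots,Y_m$ be i.i.d.\ Uniform$[n]$ draws with $m = 2\tau$, and let $X = |\{Y_1,\ldots,Y_m\}|$ be the number of distinct blocks observed. The iteration completes after at most $2\tau$ calls iff $X \geq \tau$, so it suffices to lower-bound $\Pr(X \geq \tau)$. The approach has two steps: lower-bound $\mathbb{E}[X]$, then apply a concentration inequality.

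For the mean, writing $X = \sum_{j=1}^n \mathbf{1}[j \text{ appears}]$ gives $\mathbb{E}[X] = n\bigl(1-(1-1/n)^m\bigr) \geq n\bigl(1-e^{-m/n}\bigr) = n\bigl(1-e^{-2\tau/n}\bigr)$. For the concentration, I would use McDiarmid's bounded-differences inequality: changing a single $Y_j$ alters $X$ by at most $1$, so
\[
\Pr\bigl(X \leq \mathbb{E}[X] - t\bigr) \leq \exp\!\bigl(-2t^2/m\bigr) = \exp\!\bigl(-t^2/\tau\bigr).
\]
Setting $t = \mathbb{E}[X] - \tau$ and substituting the mean bound, the exponent becomes at most $-n \cdot \varphi(\rho)$ where $\rho = \tau/n$ and $\varphi(\rho) = (1 - e^{-2\rho} - \rho)^2/\rho$. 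It then remains to check the one-variable inequality $\varphi(\rho) \geq 1/60$ on the specified range $\rho \in (0.02, 0.6)$.

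\textbf{Main obstacle.} The mean-concentration calculation is routine; the delicate part is verifying the numerical constant $1/60$ at the two endpoints of the interval. Near $\rho = 0.02$ the mean $\mathbb{E}[X]$ is only slightly above $\tau$, so the deviation $t = \mathbb{E}[X]-\tau$ is small and must be bounded from below carefully using a second-order expansion of $1-e^{-2\rho}$; near $\rho = 0.6$ the denominator $\tau$ in the McDiarmid exponent is large, so again the inequality is tight. If McDiarmid turns out to give a slightly loose constant, I would fall back on a multiplicative Chernoff bound applied to the negatively associated indicators $\mathbf{1}[N_j = 0]$ (the $j$-th bin is empty), whose concentration is sharper for skewed Binomial-type sums; this pins down the $1/60$ across the full interval $(0.02n, 0.6n)$.
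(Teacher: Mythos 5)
Your plan coincides with the paper's own proof: part (i) is the identical coupon-collector computation with geometric waiting times $n/(n-i)$, and part (ii) is exactly the paper's argument — McDiarmid's bounded-differences inequality applied to the number of distinct bins after $2\tau$ throws, with deviation $t=\E[X]-\tau$ bounded below via $\E[X]\geq n(1-e^{-2\tau/n})$, reducing to the same one-variable inequality $\frac{n}{\tau}\bigl(1-e^{-2\tau/n}-\tau/n\bigr)^2\geq 1/60$ that the paper checks at the two endpoints of $(0.02n,0.6n)$. Your observation that the endpoint verification is the delicate step is apt (the value at $\rho=0.6$ is in fact only marginally around $1/60$), but the approach is the same, so there is nothing further to add.
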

\begin{proof}
	The first claim is the well-known coupon collector problem.
	
	The second claim requires an upper bound of the expectation. In expectation, we need $\frac{n}{n-k}$ balls to increase the unique count from $k$ to $k+1$. So in expectation we need
	\begin{align*}
	1+\frac{n}{n-1} + \frac{n}{n-2}+...+\frac{n}{n-\tau+1}= \tau+\sum_{i=1}^{\tau-1}\frac{i}{n-i}\\
	\leq \tau+\frac{1+2+\cdots+(\tau-1)}{n-\tau+1}=\tau+\frac{\tau(\tau-1)}{2(n-\tau+1)}<\tau\left[1+\frac{1}{2(n/\tau-1)}\right].
	\end{align*}

	To see the second claim, first defined $f_{t}$ to be the number of non-empty bins after $t$ random ball throws, which can be consider as a function of the $t$ iid ball throws $X_1, X_2,..., X_t$.
	It is clear that if we change only one of the $X_i$, $f_{t}$ can be changed by at most $1$.
	Also, note that the probability that any one bin being filled is $1-(1-\frac{1}{n})^t$, so
	$\E f_t = n\left[1-\left(1-\frac{1}{n}\right)^t\right].$
	
	By the McDiarmid's inequality, $
	\P\left[ f_t < \E f_t - \epsilon\right] \leq \exp{\left[-\frac{2\epsilon^2}{t}\right]}.$
	Take $t=2\tau$, and $\epsilon= \E f_{2\tau}-\tau$, then
	\begin{align*}
	\P\left[ f_{2\tau} < \tau\right]\leq& \exp{\left[-\frac{1}{\tau}\left(n\left[1-\left(1-\frac{1}{n}\right)^{2\tau}\right] - \tau\right)^2\right]} \leq \exp{\left[-\frac{1}{\tau}\left(n\left[1-e^{-\frac{2\tau}{n}}\right] - \tau\right)^2\right]} \\
	=& \exp{\left[-n\cdot\frac{n}{\tau}\left(1-e^{-\frac{2\tau}{n}} - \frac{\tau}{n}\right)^2\right]} \leq \exp{[-Cn]},
	\end{align*}
	where $C$ is some constant which is the smaller of the two evaluations of the function $\frac{n}{\tau}\left(1-e^{-\frac{2\tau}{n}} - \frac{\tau}{n}\right)^2$ at $\tau=0.02n$ and $\tau=0.6n$ (where the function is concave between the two). As a matter of fact, $C$ can be taken as $\frac{1}{60}$.
	
	Let $g_{\tau}$ be the number of balls that one throws that fills $\tau$ bins, the result is proven by noting that
	$$\P(g_{\tau}\leq 2\tau) = \P(f_{2\tau}\geq \tau) \geq 1- \exp{[-Cn]}.$$
\end{proof}



\subsection{Curvature and Lipschitz Constant}\label{sec:lipschitz_curvature}
In this section, we illustrate the relationship between the coordinate curvature constant, coordinate gradient Lipschitz conditions, and work out the typical size of the constants in Theorem~\ref{thm:delay}. For the sake of discussion, we will focus on the quadratic function $f(x)  = \frac{x^TAx}{2} + bTx$. We start by showing that for quadratic function. The constant that one can get via choosing a specific norm can actually match the curvature constant.
\begin{proposition}
	For quadratic functions with Hessian $A\succeq 0$, there exists a norm $\|\cdot\|$ such that the curvature constant $C_f  = [D_{\|\cdot\|}]^2L_{\|\cdot\|}$.
\end{proposition}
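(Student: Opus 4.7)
The plan is to use the norm canonically induced by the Hessian itself. Specifically, if $A\succ 0$, set $\|v\| := \sqrt{v^T A v}$. If $A$ is only positive semidefinite, observe that both $f$ and $\nabla f$ are constant along $\ker A$, so this subspace contributes nothing to the curvature, diameter, or Lipschitz constant; we may therefore either restrict the argument to $(\ker A)^\perp$ or add $\varepsilon I$ and take $\varepsilon\downarrow 0$ at the end. With this reduction, $\|\cdot\|$ is a genuine norm on the relevant subspace.

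First I would compute $C_f$ directly. Since $f$ is quadratic, the Bregman remainder is exact: $f(y)-f(x)-\langle y-x,\nabla f(x)\rangle = \tfrac{1}{2}(y-x)^TA(y-x)$. Substituting $y=x+\gamma(s-x)$ into the definition \eqref{eq:curv_S} of $C_f$ and cancelling $\gamma^2$ gives
\begin{equation*}
C_f \;=\; \sup_{x,s\in\cM}\,(s-x)^T A (s-x).
\end{equation*}
Next I would compute the diameter under $\|\cdot\|$: by construction $D_{\|\cdot\|}^2=\sup_{x,y\in\cM}\|x-y\|^2=\sup_{x,y\in\cM}(x-y)^TA(x-y)=C_f$, so the diameter squared already matches the curvature.

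For the gradient-Lipschitz constant, I would use the definition and the quadratic form: for any $s$ with $\|s\|\le\gamma$ we have $s^TAs=\|s\|^2\le\gamma^2$, so
\begin{equation*}
\tfrac{1}{\gamma^2}\bigl(f(x+s)-f(x)-\langle s,\nabla f(x)\rangle\bigr)=\tfrac{1}{2\gamma^2}\,s^TAs\;\le\;\tfrac{1}{2},
\end{equation*}
with equality achieved by any $s$ satisfying $\|s\|=\gamma$. Thus $L_{\|\cdot\|}=\tfrac{1}{2}$, and multiplying yields $D_{\|\cdot\|}^2 L_{\|\cdot\|} = C_f$ once the cosmetic factor of $2$ between the $\tfrac{2}{\gamma^2}$ normalization of $C_f$ and the $\tfrac{1}{\gamma^2}$ normalization of $L$ is absorbed (either by rescaling $\|\cdot\|$ or by the corresponding convention for $L$; the two definitions differ only by this constant and the norm construction is the substantive content).

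The only real obstacle is the degenerate case $A\succeq 0$ with nontrivial kernel, since then $\|\cdot\|$ is merely a seminorm; this is handled cleanly by the projection/quotient argument above, because the portion of $s-x$ in $\ker A$ contributes $0$ to each of $C_f$, $D^2$, and the Bregman remainder that defines $L$. Everything else is a direct calculation exploiting the fact that for a quadratic function the Hessian-induced geometry exactly reproduces the Bregman geometry, which is precisely why a norm matching $C_f$ with $D^2L$ must exist.
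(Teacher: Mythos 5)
Your proposal is correct and follows essentially the same route as the paper: both proofs take $\|\cdot\|$ to be the $A$-norm $\sqrt{(\cdot)^TA(\cdot)}$ and exploit the fact that the quadratic Bregman remainder equals $\tfrac12\|y-x\|_A^2$ exactly, so that $C_f$ and $D_{\|\cdot\|}^2$ coincide and the Lipschitz constant is the identity constant. Your treatment is slightly more self-contained (you compute both bounds directly rather than citing \citep[Appendix D]{jaggi2013revisiting} for the upper bound, you handle the degenerate $\ker A\neq\{0\}$ case, and you correctly flag the factor-of-$2$ mismatch between the paper's formal $\tfrac{1}{\gamma^2}$ normalization of $L_{\|\cdot\|}$ and the convention under which the stated identity holds exactly), but these are refinements of the same argument rather than a different approach.
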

\begin{proof}
	We will show that this norm is simply the $A$-norm, $\|\cdot\|_A  = \sqrt{(\cdot)^T A (\cdot) }$.
	The upper bound $C_f \leq [D_{\|\cdot\|_A}]^2L_{\|\cdot\|_A}$ is a direct application of the result in~\citep[Appendix D]{jaggi2013revisiting}.
	To show a lower bound it suffices to construct $s,x\in \cM, \gamma \in [0,1]$ and $y= \gamma s+ (1-\gamma)x$ such that 
	$$\frac{2}{\gamma^2} (f(y)  - f(x)  - \langle y-x, \nabla f(x)\rangle) =  [D_{\|\cdot\|_A}]^2L_{\|\cdot\|_A}.$$

	For quadratic functions, 
	$$\frac{2}{\gamma^2}[f(y)  - f(x)  - \langle y-x, \nabla f(x)\rangle]  = \frac{1}{2}(y-x)^TA(y-x) = \frac{1}{\gamma^2}\|y-x\|_A^2$$
	Take $\gamma = 1$ and $y,x$ on the boundary of $\cM$ such that $\|y-x\|_A = D_{\|\cdot\|_A}$, as a result, we get 
	$
	C_f \geq D_{\|\cdot\|_A}]^2.
	$
	It remains to show that the gradient Lipschitz constant with respect to $\cA$-norm is $1$, which directly follows from the Taylor expansion.	
\end{proof}
Similar arguments work for $C_f^{(i)}$ and $C_f^{(S)}$ under the same norm. Clearly, this means that the corresponding restriction of the subset domain has $A_{i,i}$-norm or $A_{(S)}$-norm.

We now consider the approximation constants due to the delays in Theorem~\ref{thm:delay}, and work out more explicit bounds for quadratic functions and carefully chosen norm.  Recall that the simple bound \eqref{eq:thm_delay_bound1} has constant $\delta$ in the order of 
$$\frac{\kappa \tau L_{\|\cdot\|}^1 D_{\|\cdot\|}^{1} D_{\|\cdot\|}^{\tau}}{C_f^\tau}.$$
Suppose we use the  $A$-norm, then $L_{\|\cdot\|}^1 = L_{\|\cdot\|}^\tau= 1$, and $C_f^\tau = [D_{\|\cdot\|}^{\tau}]^2$, the bound can be reduced to
$$
\delta = O(\frac{\tau D_{\|\cdot\|}^{1}}{D_{\|\cdot\|}^{\tau}}) = O(\kappa\sqrt{\tau}).
$$
where the last step requires $\cM_i$ to be all equivalent and $A$ to be block-diagonal with identical $A_{(i)}$.

Similarly the strong bound \eqref{eq:thm_delay_bound2} has constant $\delta$ in the order of 
$$
\delta = \tilde{O}\left(\frac{\tau L_{\|\cdot\|}^1 D_{\|\cdot\|}^{1} D_{\|\cdot\|}^{\kappa\tau}}{C_f^\tau}\right) = \tilde{O}\left(\frac{\tau D_{\|\cdot\|}^{1} D_{\|\cdot\|}^{\kappa\tau}}{[D_{\|\cdot\|}^{\tau}]^2}\right) = \tilde{O}(\sqrt{\kappa\tau})
$$
Again, the last step requires a strong assumption that $\cM_i$ to be all equivalent and $A$ to be block-diagonal with identical diagonal blocks. While these calculations only apply to specific case of a quadratic function with a lot of symmetry, we conjecture that in general the flexibility of choosing the norm will allow the ratio of these boundedness constants and $C_f^\tau$ to be a well-controlled constant and the typical dependency on the system parameter $\tau$ and $\kappa$ should stay within the same ball park.

\subsection{Examples and illustrations}

\begin{example}[Structural SVM worst-case bound]\label{eg:structSVM_worst}
	For structural SVM with arbitrary data (including even pathological/trivial data), using notation from Lemmas~A.1~and~A.2 of \citet{lacoste2013block}, define $R:=\max_{i\in[n], y\in\cY_i} \|\psi_i(y)\|_2$. Then we can provide an upper bound
	\begin{equation}\label{eq:C_f^tau_bound}
	B, \mu \leq \frac{R^2}{\lambda n^2}\quad\implies\quad
	C_f^{\tau} \leq \frac{4\tau^2R^2}{\lambda n^2}.
	\end{equation}
	In this case, for any $\tau=1,...,n$, the rate of convergence will be the same $O(\frac{R^2}{\lambda k})$.
\end{example}

\paragraph{An illustration for the group fused lasso}
Figure~\ref{fig:signalIllustration} shows a typically application for group fused lasso (filtering piecewise constant multivariate signals whose change poitns are grouped together).
\begin{figure}[h]
	\centering
	\includegraphics[width=1\linewidth]{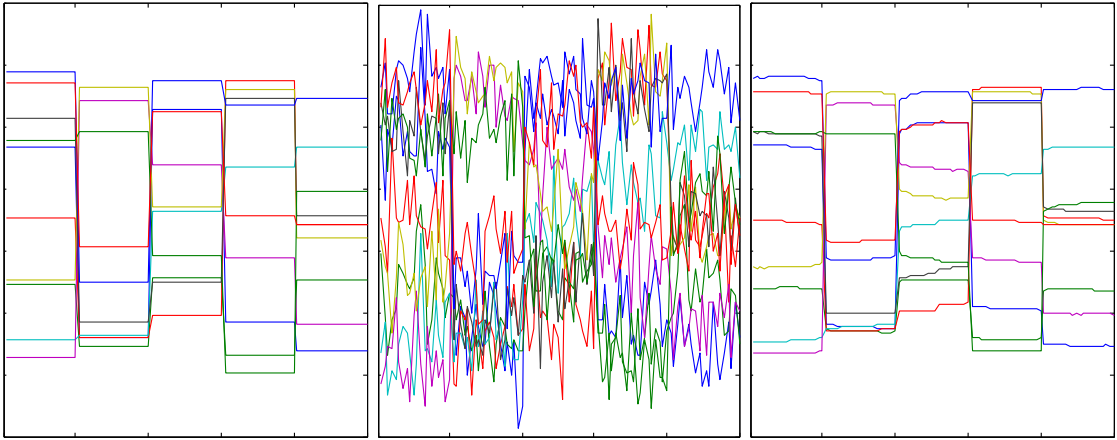}
	\caption{Illustration of the signal data used in the Fused Lasso experiments. We show the
		original signal (left), the noisy signal given to the algorithm (middle), and the
		signal recovered after performing the fused lasso optimization (right). }
	\label{fig:signalIllustration}
\end{figure}


\subsection{Comparison to parallel block coordinate descent}\label{sec:cmp_richtarik}
With some understanding on $C_f^{\tau}$, we can now explicitly compare the rate of convergence in Theorem~\ref{thm:PFW_primal} with parallel BCD  \citep{richtarik2012parallel,liu2013asynchronousCD} under the assumption of $\mu = O(B/\tau)$ --- a fair and equally favorable case to all of these methods. To facilitate comparison, we will convert the constants in all three methods to block coordinate gradient Lipschitz constant $L_i$, which obeys
\begin{equation}\label{eq:Li}
f(x+s_{[i]}) \leq f(x) + \langle s_{[i]}, \nabla f(x)\rangle +  L_i \|s_{[i]}\|^2,
\end{equation}
for any $x\in \cM, s_{(i)}\in \cM_i$. Observe that
$B_i$ $\leq$  $4L_i \mathrm{diam}(\cM_i)^2$ $=$ $L_i \max_{x_i^*,x_i\in\cM_i}\|x_i - x^*_i\|^2$, so
\begin{align}
B &\leq \frac{1}{n}\sum_{i} L_i \max_{x_i,x^*_i}\|x_i-x^*_i\| \\
&\leq \frac{1}{n}\sum_{i}L_i \max_x \|x-x^*\|^2=\E_i(L_i) R^2
\end{align}
where $ R:=\max_x \|x-x^*\|$. The rate of convergence for the three methods (with $\tau$ oracle calls considered as one iteration) are given below.
\begin{table}[H]
	\centering
	\begin{tabular}{|c|c|}
		\hline
		Method & Rate \\ \hline
		\algo (Ours) & 
		$O_p\left(\frac{n\E_i(L_i)R^2}{\tau k}\right)$ \\ \hline
		P-BCD\tablefootnote{In \citet[Theorem~19]{richtarik2012parallel}} & $O_p\left(\frac{n \E_i(L_i)R^2}{\tau k}\right)$ \\ \hline
		AP-BCD\tablefootnote{In \citet[Theorem~3]{liu2013asynchronousCD}} & $O_p\left(\frac{n \max_iL_iR^2}{\tau k}\right)$ \\ \hline
	\end{tabular}
\end{table}
The comparison illustrates that these methods have the same $O(1/k)$ rate and almost the same dependence on $n$ and $\tau$ despite the fact that we use a much simpler linear oracle. Nothing comes for free though: Nesterov acceleration does not apply for Frank-Wolfe based methods in general, while a careful implementation of parallel coordinate descents can achieve $O(1/k^2)$ rate without any full-vector interpolation in every iteration \citep{fercoq2015accelerated}. Also, Frank-Wolfe methods usually need additional restrictive conditions or algorithmic steps to get linear convergence for strongly convex problems.

These facts somewhat limits the applicability of our method to cases when projection can be computed as efficiently as \eqref{eq:exact_oracle}. However, as is surveyed in \citep{jaggi2013revisiting}, there are many interesting cases when \eqref{eq:exact_oracle} is much cheaper than projections, e.g., projection onto a nuclear norm ball takes $O(n^3)$ while \eqref{eq:exact_oracle} takes only $O(n^2)$.

\end{document}